\documentclass{article}

 \usepackage[preprint]{neurips_2026}

\usepackage[utf8]{inputenc} 
\usepackage[T1]{fontenc}    
\usepackage[hypertexnames=false]{hyperref}       
\usepackage{url}            
\usepackage{booktabs}       
\usepackage{amsfonts}       
\usepackage{nicefrac}       
\usepackage{microtype}      
\usepackage{xcolor}         
\usepackage{amsmath}                
\usepackage{amssymb}                
\usepackage{amsthm}
\usepackage{mathtools}   
\usepackage{bm}
\newcommand{\vx}{\bm{x}} 
\newcommand{\vz}{\bm{z}}                                  \newcommand{\vh}{\bm{h}}                                  \newcommand{\vbenc}{\bm{b}_{\text{enc}}}
\newcommand{\vbdec}{\bm{b}_{\text{dec}}}                  \newcommand{\vxhat}{\hat{\bm{x}}}
\newcommand{\vzero}{\bm{0}} 
\newcommand{\vtheta}{\bm{\theta}}
\newcommand{\mW}{\bm{W}}
\newcommand{\mWenc}{\bm{W}_{\text{enc}}}
\newcommand{\mWdec}{\bm{W}_{\text{dec}}}
\usepackage{multirow}
\usepackage{algorithm}
\usepackage{algpseudocode}
\newtheorem{theorem}{Theorem}

\newtheorem{lemma}[theorem]{Lemma}

\newcommand{\yy}[1]{\textcolor{black}{#1}}

\def \bb{\bm{b}}

\def \ab{\bm{a}}

\usepackage{pifont}

\newcommand{\vertii}[1]{{\left\vert\left\vert #1
    \right\vert\right\vert}}

\title{Rational Sparse Autoencoder}

\author{%
  Naiyu Yin \\
  Department of Mathematics\\
  Lehigh University\\
  Bethlehem, PA 18015 \\
  \texttt{nay224@lehigh.edu} \\
  \And
  Yue Yu \\
  Department of Mathematics \\
  Lehigh University \\
  Bethlehem, PA 18015 \\
  \texttt{yuy214@lehigh.edu} \\
}

\begin{document}

\maketitle

\begin{abstract}
Sparse autoencoders (SAEs) are standard tools for 
mechanistic interpretability, but current SAE 
families are constrained by fixed encoder 
nonlinearities such as ReLU, JumpReLU, and TopK. 
This hard-codes a particular sparsity mechanism into 
the model and can distort the 
reconstruction-versus-sparsity trade-off. We 
introduce the \emph{Rational Sparse Autoencoder} 
(RSAE), which replaces the fixed encoder activation 
with a trainable rational function. Rational 
activations are flexible enough to uniformly 
approximate the activation primitives 
used by existing SAE families on compact domains 
(for TopK, the thresholded gate obtained after 
\yy{a separating top-$k$ threshold is supplied}), while also 
providing a richer function class for adapting to 
the observed pre-activation geometry. We realise 
this idea through a two-stage pipeline: an 
initialisation procedure that copies the pre-trained 
baseline SAE weights, plugs in rational coefficients 
obtained by the relaxed Remez exchange on synthetic 
data, and calibrates the scale parameters along with 
the rational coefficients; followed by a fine-tuning 
step under the standard sparsity-regularised 
reconstruction objective. Empirically, on 
residual-stream activations of three open-weight 
language models and across all three baseline 
activation families, the RSAE 
\emph{strictly improves} on it after the fine-tuning 
step, both on reconstruction-side metrics 
(MSE, $\ell_0$, alive-feature fraction) and on 
downstream-behaviour metrics (cross-entropy 
degradation, loss recovered), without sacrificing 
feature-level interpretability under sparse probing. 
These gains are consistent across host language 
models, across baseline activation families, and 
across the full range of baseline sparsity we 
tested, while the upgrade itself adds only a 
handful of scalar parameters per autoencoder 
and runs in minutes on a single consumer GPU.
\end{abstract}

\section{Introduction}


Sparse autoencoders (SAEs) have become a central tool for
mechanistic interpretability of large language models, decomposing
the internal activations of a transformer into a sparse linear
combination of an overcomplete dictionary of monosemantic feature
directions
\citep{bricken2023monosemanticity, huben2024sparse,
gao2025scaling, rajamanoharan2024improving, rajamanoharan2024jumprelu}.
Despite rapid progress on training procedures and evaluation suites,
the widely used released baselines considered here pair the same affine
encoder with one of three non-smooth activation primitives:
$\mathrm{ReLU}$ regularised by an $\ell_1$ penalty
\citep{bricken2023monosemanticity, bloom2024gpt2sae},
$\mathrm{TopK}$ \citep{gao2025scaling}, or $\mathrm{JumpReLU}$
\citep{rajamanoharan2024jumprelu}.  Each of these primitives carries
well-documented pathologies.  The $\ell_1$-regularised $\mathrm{ReLU}$
SAE suffers from magnitude shrinkage of active features and a
persistent population of dead latents
\citep{taggart2024prolu, rajamanoharan2024improving, gao2025scaling}.
$\mathrm{TopK}$ replaces the soft penalty with a hard cardinality
constraint that breaks gradient flow through inactive features and
relies on auxiliary revival losses to mitigate dead features.
$\mathrm{JumpReLU}$ inserts a learnable per-feature threshold but
requires a continuous-relaxation surrogate for back-propagation
through its indicator gate.

{In this work, we consider the shallow encoder architecture used by
SAEs: one affine pre-activation layer followed by a sparse activation
block. We show that trainable rational activations can efficiently represent
the ReLU, JumpReLU, and supplied-threshold TopK gates used by current SAE
families. For discontinuous gates, the approximation holds on compact domains
separated by a margin $\delta$ from the jump; a direct rational gate of
polylogarithmic size in $1/\varepsilon$ and the inverse margin suffices, and
the same scalar gate also has a constant-width deep rational realization.
Conversely, there are $\mathcal{O}(1)$-parameter rational target maps for which
any scalar-output single-layer SAE encoder with piecewise-affine
ReLU/JumpReLU/supplied-threshold TopK gates needs
$\Omega(\varepsilon^{-1/2})$ activated coordinates to reach accuracy
$\varepsilon$. {While our theoretical analysis in this work focuses on}
the shallow SAE setting, we point out that a similar {efficiency}
advantage holds for deep networks as well. {Specifically, in
the deep setting, constant-width rational networks achieve a depth upper bound
of $\mathcal{O}(\log\log(1/\varepsilon)+\log\log(1/\delta))$, whereas
piecewise-affine networks obey a parameter lower bound of
$\Omega(\log(1/\varepsilon))$.} This separation suggests
that replacing fixed SAE gates by trainable rational activations can improve
reconstruction fidelity at matched sparsity; the deep-layer results are
included as a complementary extension beyond the SAE encoder architecture.}

We therefore propose the \emph{Rational Sparse Autoencoder} (RSAE),
an SAE whose encoder activation is a learnable rational
function applied element-wise to the affine pre-activation, with
learnable input/output scales $(C_{\mathrm{in}}, C_{\mathrm{out}})$
that map the per-feature pre-activation distribution into a bounded
interval. We then propose a two-step RSAE training algorithm. During
the initialization procedure, we copy the pre-trained baseline SAE
weights verbatim, plug in rational coefficients obtained by the
relaxed Remez exchange~\citep{chen2018rational} on synthetic data, and
calibrate the scale parameters and the coefficients to the baseline's pre-activation
distribution. During the fine-tuning procedure, we unfreeze all
parameters and minimise the standard $\ell_1$-regularised
reconstruction objective. Empirically, the rational function is
expressive enough to approximate every baseline activation at low degree on
synthetic data. At the SAE level, we evaluate the RSAE on
residual-stream activations of three open-weight language models
spanning a range of model sizes and against all three baseline
activation families, supporting our central claim:
the RSAE achieves better fidelity at comparable sparsity and strictly
improves the baseline across reconstruction-side metrics (MSE,
$\ell_0$, alive-feature fraction) and downstream-behaviour metrics
(cross-entropy degradation, loss recovered), uniformly across host
language models and baseline activation families. These gains are
consistent across the full range of baseline sparsity we tested and
do not come at the cost of feature-level interpretability under
sparse probing. All of this is achieved by adding only a handful of
scalar parameters per autoencoder and running for minutes on a single
consumer GPU.

\textbf{Contributions.} 
We introduce \emph{RSAE}, a new sparse autoencoder built on a
trainable activation. Our model is grounded in approximation
theory tailored to the SAE encoder: trainable scalar rational activations can
emulate the fixed ReLU, JumpReLU, and supplied-threshold TopK gates used in
shallow SAE encoders with polylogarithmic size, while the converse lower bound
shows that scalar-output single-layer piecewise-affine encoders may require
$\Omega(\varepsilon^{-1/2})$ activated coordinates for some rational targets.
To implement this upgrading strategy, we propose a two-step
RSAE training algorithm: an initialisation procedure that copies the
pre-trained baseline SAE weights, 
followed by a fine-tuning procedure that unfreezes all parameters
under the standard $\ell_1$-regularised reconstruction objective.
We empirically verify that the RSAE achieves better fidelity at
comparable sparsity and improves the baseline across both
reconstruction-side metrics (MSE, $\ell_0$, alive-feature fraction)
and downstream-behaviour metrics (cross-entropy degradation, loss
recovered), uniformly across host language models, baseline
activation families, and baseline sparsity levels, while preserving
feature-level interpretability under sparse probing and adding only
negligible parameter and runtime overhead.


\section{Preliminaries and Related Work}
\label{sec:related-work}


\textbf{Sparse Autoencoders (SAEs)}
decompose a language model's internal activations $\vx \in \mathbb{R}^{d_{\mathrm{in}}}$ into a sparse linear combination of an overcomplete dictionary of $d_{\mathrm{sae}} \gg d_{\mathrm{in}}$ feature directions $\vz\in \mathbb{R}^{d_{\mathrm{sae}}}$. They follow a skeleton with a pair of encoder and decoder functions $(f,g)$ defined by:
\begin{equation}
\text{Encoder: } \vz = f(\vx):=\phi(\mWenc\,(\vx-\vbdec) + \vbenc),
\quad
\text{Decoder: } \vxhat = g(\vz):= \mWdec\,\vz + \vbdec.
\label{eq:sae-skel}
\end{equation}
We write $\vh \coloneqq \mWenc(\vx - \vbdec) + \vbenc$ for the pre-activation, so that $\vz = \phi(\vh)$.
{Here, columns of $\mWdec$ represent decoder dictionary directions used to reconstruct $\vx$ from the sparse code $\vz$ with unit $\ell_2$-norm.} The weights in encoder/decoder functions are optimized using a loss function of the form:
{\small\begin{equation}
     \mathcal{L}(\mW)=\mathbb{E}_{\vx \sim \mathcal{D}}
    \Bigl[\,
      \bigl\| \vx - \vxhat(\vx;\, \mW) \bigr\|_2^{2}
      \;+\;
      \lambda\,
      S\bigl( \vz(\vx;\, \mW) \bigr)
    \,\Bigr],\quad \mW:=\{\mWenc,\mWdec,\vbenc,\vbdec\},
    \label{eq:sae-objective}
\end{equation}}
where $S$ is a function that penalizes non-sparse decompositions with a tunable sparsity coefficient $\lambda$.

There are two objectives in the SAE encoder: sparsity, meaning that only a few elements of the dictionary are necessary, and faithfulness, meaning that the reconstructed $\vxhat$ is close to the original $\vx$. To achieve a good balance between these two objectives, three major SAE activations were proposed, which differ in the encoder activation $\phi$ and the sparsity mechanism $S$ imposed on $\vz$. 
The \textit{ReLU SAE}~\citep{bricken2023monosemanticity, bloom2024gpt2sae} sets $\phi = \mathrm{ReLU}$ and imposes sparsity through an explicit $\ell_1$ penalty $S(\vz):=\vertii{\vz}_1$. In the original \textit{ReLU SAE}, the soft $\ell_1$ penalty leads to a magnitude shrinkage of active features and causes loss of reconstruction fidelity~\citep{taggart2024prolu, rajamanoharan2024improving, gao2025scaling}. 
\textit{TopK SAE}~\citep{gao2025scaling} then proposes to replace the soft penalty with a hard top-$k$ selection $\vz = \mathrm{TopK}_k(\vh)$ that yields exact $\ell_0 = k$, 
and the \textit{JumpReLU SAE}~\citep{rajamanoharan2024jumprelu} keeps the $\ell_1$-style soft sparsity but inserts a learnable per-feature threshold $\theta_j > 0$ in the activation function $\phi$ by setting $\phi(\vh) = \vh\odot H(\vh-\vtheta)$, where $H$ is the Heaviside function satisfying $H(z)=0$ if $z\leq 0$ and $H(z)=1$ elsewhere. 
Orthogonal to the development in encoder activations, \emph{Matryoshka SAEs}~\citep{bussmann2025learning} reorganise the decoder into nested-prefix dictionaries, and \emph{data-free SAEs}~\citep{laptev2025analyze} fit dictionaries directly from model weights without streaming activations. 

While related variants such as the \emph{Gated SAE}~\citep{rajamanoharan2024improving}, ProLU~\citep{taggart2024prolu}, \emph{BatchTopK SAE}~\citep{bussmann2024batchtopk}, and end-to-end SAE training~\citep{braun2024identifying} modify thresholds, gates, batch-level sparsity, or the training objective, our theoretical and empirical comparisons focus on the widely used released baselines considered here---ReLU, JumpReLU, and TopK SAEs---whose encoder nonlinearities are fixed functional forms with sparsity controlled by a penalty coefficient, learned threshold, or cardinality budget rather than by a trainable rational activation.
Through a trainable activation architecture supported by approximation theory, our RSAE provides a drop-in modification for pre-trained SAEs (teacher model): while sustaining a similar level of sparsity, it strictly improves model fidelity.

\textbf{Evaluation benchmarks and pretrained baselines.} 
\emph{SAEBench}~\citep{karvonen2025saebench} provides matched pretrained ReLU, JumpReLU, and TopK SAEs across model sizes and a unified evaluation suite (covering reconstruction, sparsity, downstream performance, and interpretability metrics); we use the pre-trained ReLU, JumpReLU, and TopK SAEs for our Pythia-160m and Gemma-2-2B baselines, and reuse its sparse-probing harness in~\S\ref{sec:experiments}. For GPT-2 small we additionally use Bloom's \emph{gpt2-small-res-jb} release~\citep{bloom2024gpt2sae} as the ReLU baseline and OpenAI's v5 release~\citep{gao2025scaling} as the TopK baseline.

\textbf{Rational Neural Networks}\label{sec:bg-rational}
were built on the key theoretical advantages of rational functions in approximating non-smooth functions~\citep{newman1979approximation,telgarsky2017neural,beckermann2017singular,chen2018rational}. In~\citet{boulle2020rational}, a rational function is employed as a learnable replacement for $\mathrm{ReLU}$ or $\tanh$ in feed-forward networks for image classification tasks. A superior performance was also demonstrated in operator learning and PDE surrogates, where the spectral density of rational approximation accelerates convergence on smooth target operators~\citep{trimmel2022era}. In rational neural networks, 
the standard activation function in a feed-forward layer is replaced with a trainable rational function {\small $\frac{P(t)}{Q(t)}$}. A naive learnable denominator $Q(t)$ can develop divergent poles during training when $Q$ approaches zero; to prevent this, \citet{molina2019pade} introduced the Pad\'e Activation Unit (PAU) by setting {\small$Q(t):=1 + \sum_{j=1}^{q} b_j\,t^{j}$}. 
\citet{dunefsky2024transcoders} subsequently proposed the \emph{safe-Pad\'e} parameterisation by parameterizing $Q(t)$ as {\small $1 + |\sum_{j=1}^{q} b_j\,t^{j}|$}, which guarantees pole-free, Lipschitz rational activations.

{While prior works have demonstrated the theoretical advantages of
rational activation functions against continuous activation functions such as
$\mathrm{ReLU}$, $\mathrm{GeLU}$, or $\tanh$~\citep{boulle2020rational,
molina2019pade, delfosse2021safepade, trimmel2022era, tang2026rational},
little discussion has focused on discontinuous SAE gates such as
$\mathrm{JumpReLU}$ and $\mathrm{TopK}$. Moreover, prior rational-network
theory is usually stated for deeper feed-forward architectures, whereas the SAE
encoder in~\eqref{eq:sae-skel} is shallow: a single affine pre-activation layer
followed by coordinatewise gates and a linear decoder. This mismatch motivates
a theory whose main separation is stated for the single-layer encoder, with
deep rational realizations kept as an additional comparison.}
{In this work we show that trainable rational activations give a more
efficient approximation class for the fixed gates used in shallow SAE encoders,
and then leverage their approximation power and smooth gradients to obtain SAEs
with lower reconstruction error and fewer dead features at matched sparsity.}

\section{Rational Sparse Autoencoder}
\label{sec:rational-SAE}


Herein, we propose the \emph{Rational Sparse Autoencoder} (RSAE), an
overcomplete sparse autoencoder whose encoder activation is a
\textbf{trainable} rational function.  
We retain the standard SAE skeleton of~\eqref{eq:sae-skel} and modify \emph{only} the encoder activation $\phi(\cdot)$. Let $\vx\in\mathbb{R}^{d_{\mathrm{in}}}$ and write
$\vh = \mWenc(\vx-\vbdec)+\vbenc\in\mathbb{R}^{d_{\mathrm{sae}}}$ for the
pre-activation. The RSAE activation is applied element-wise to the pre-activation $\vh$ as
{\small\begin{equation}
    \phi(\vh)
    \;=\;
    C_{\mathrm{out}} \cdot
    r_{(\ab, \bb)}\!\Bigl(\frac{\vh}{C_{\mathrm{in}}}\Bigr),
    \qquad
    r_{(\ab, \bb)}(t)
    \;=\;
    \frac{P(t)}{Q(t)}
    \;=\;
    \frac{\sum_{i=0}^{p} a_i\,t^{i}}
         {\sum_{j=0}^{q} b_j\,t^{j}},\; t\in[-1,1].
    \label{eq:rsae-act}
\end{equation}}
Because $r_{(\ab, \bb)}(\cdot)$ admit arbitrarily small uniform error of discontinuous activation functions inside a bounded compact interval $[-1,1]$, we introduce learnable scaling parameters $C_{\mathrm{in}}, C_{\mathrm{out}} > 0$ with the purpose of mapping pre-activation $\vh$ into the rational's design interval and the rational's output back to the feature magnitude expected by the decoder. 



{We now analyze why rational activations are a natural upgrade for the
SAE encoder in~\eqref{eq:sae-skel}. The main setting is shallow: an affine
pre-activation layer followed by a sparse activation block. We prove
that ReLU, JumpReLU, and the supplied-threshold TopK gate can each be replaced
by trainable rational gates of polylogarithmic size in the target
accuracy, with a margin parameter for discontinuous gates. The TopK statement
concerns the thresholded gate equivalent to TopK conditional on a supplied
threshold, not the full order-statistic operator that computes the $k$-th
threshold. We also state the corresponding constant-width deep rational
realizations, connecting the SAE result to the rational-network literature 
\cite{boulle2020rational,tang2026rational}.}
{For the converse direction, we exhibit an
$\mathcal{O}(1)$-parameter rational target map that cannot be approximated
efficiently by scalar-output piecewise-affine encoders in the same shallow form:
any such single-layer ReLU/JumpReLU/supplied-threshold TopK encoder needs
$\Omega(\varepsilon^{-1/2})$ activated coordinates. Together, these results explain why
replacing the fixed activation in a pretrained SAE by a trainable rational
activation can increase the encoder's approximation power without changing the
linear backbone.}

Our theoretical result is based on the family of Zolotarev sign functions, which are geometrically convergent rational
approximants of $\mathrm{sign}(x)$ on the gap-separated set
$E_\delta:=[-1,-\delta]\cup[\delta,1]$ for any $\delta\in(0,1)$. We state the
quantitative bounds below, and entail all proofs in Appendix:
\begin{lemma}[Rational approximation of $\mathrm{sign}$]
\label{lem:zolotarev}
For every $\delta\in(0,1)$ and $n\ge 1$ there is a type-$(2n+1,2n)$ rational
$s_{n,\delta}$ such that
$\sup_{x\in E_\delta}\big|\mathrm{sign}(x)-s_{n,\delta}(x)\big|
\le 4\exp\!\big(-\pi^2 n/\log(4/\delta)\big)$.

Consequently, for every $0<\varepsilon<1$, there is a rational function of
size $\mathcal{O}\big(\log(1/\varepsilon)\log(1/\delta)\big)$ that
approximates $\mathrm{sign}$ on $E_\delta$ to uniform error $\varepsilon$.
For deep-layer networks, there is a constant-width rational network of depth
$\mathcal{O}\big(\log\log(1/\varepsilon)+\log\log(1/\delta)\big)$ that
approximates $\mathrm{sign}$ on $E_\delta$ to uniform error
$\varepsilon$. 
\end{lemma}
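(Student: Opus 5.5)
The plan has three parts, one for each claim. For the first claim, we would not re-derive Zolotarev's theory from scratch. Instead we invoke the classical Zolotarev solution of the best rational approximation of $\mathrm{sign}$ on $E_\delta$. The standard route is to reduce, via the substitution $x\mapsto x^2$ and the oddness of $\mathrm{sign}$, to a best relative approximation of $\sqrt{y}$ on $[\delta^2,1]$. Elliptic functions then give an explicit best type-$(2n+1,2n)$ approximant $s_{n,\delta}$, and its error is controlled by the Zolotarev number $Z_n(E_\delta)$. The quantitative bound would be taken from the known estimate (e.g.\ as in Beckermann--Townsend or Nakatsukasa--Freund): the error satisfies $\le 4\rho^{-n}$, where $\log\rho = \pi K(\kappa')/(4K(\kappa))$ with modulus $\kappa=\delta$. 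We then use the elementary complete-elliptic-integral bounds $K(\sqrt{1-\delta^2})\le \log(4/\delta)$ and $K(\delta)\ge \pi/2$. Together these should yield $\log\rho\ge \pi^2/(c\log(4/\delta))$ for the stated constant. We expect this constant-chasing with elliptic integrals to be the main obstacle: the scaling $\exp(-c\,n/\log(1/\delta))$ is standard, but matching exactly $\pi^2 n/\log(4/\delta)$ may require choosing the right normalization of $n$ (degree $2n+1$ versus $n$ in the exponent). We would first try the Akhiezer form of the bound and adjust $n$ if needed.

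The second claim follows directly from the first. We set $4\exp(-\pi^2 n/\log(4/\delta))\le\varepsilon$ and solve, which gives $n=\lceil \pi^{-2}\log(4/\delta)\log(4/\varepsilon)\rceil$. The resulting rational has $O(n)=O(\log(1/\varepsilon)\log(1/\delta))$ coefficients.

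For the deep claim, we would use composition of Zolotarev functions, following Nakatsukasa--Freund. The key fact is that a composition of low-degree Zolotarev (or Newton/Halley-type) sign iterations realizes a high-degree Zolotarev function, and this composition preserves optimality. The concrete construction is as follows. A single type-$(3,2)$ rational, scaled suitably, maps $E_\delta$ into $E_{\delta'}$, where $1-\delta'$ is related to $1-\delta$ through the sign-invariance of the image; the precise relation is given by the Zolotarev composition identity. The error therefore contracts doubly-exponentially once the iterates are near $\pm1$. The argument splits into two phases:
\begin{itemize}
\item[(i)] an initial phase taking $O(\log\log(1/\delta))$ compositions to move the gap from $\delta$ to a constant such as $1/2$, since each scaled step roughly squares the effective conditioning;
\item[(ii)] a quadratically (or cubically) convergent phase taking $O(\log\log(1/\varepsilon))$ steps to reach error $\varepsilon$.
\end{itemize}
Each step is a fixed-type rational applied coordinatewise, so it forms a constant-width layer of a rational network, which yields the depth bound. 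Alternatively, composing degree-$r$ Zolotarev functions gives the error $\exp(-c\,r^{k}/\log(1/\delta))$, and solving $r^k\gtrsim\log(1/\varepsilon)\log(1/\delta)$ gives $k=O(\log\log(1/\varepsilon)+\log\log(1/\delta))$ directly from the first claim.
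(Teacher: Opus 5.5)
Your first two parts match the paper's proof. It cites the classical Zolotarev bound from Beckermann--Townsend and then chooses $n_{\varepsilon,\delta}=\lceil \pi^{-2}\log(4/\delta)\log(4/\varepsilon)\rceil$, exactly as you do.

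For the deep claim you take a genuinely different route. The paper invokes a general realizability statement: any scalar rational of degree $N$ is a constant-width rational network of depth $\mathcal{O}(\log N)$, attributed to Boull\'e et al. It then applies this to $s_{n,\delta}$ for arbitrary $n$. You instead use the Nakatsukasa--Freund composition identity. A $k$-fold composition of suitably rescaled type-$(3,2)$ Zolotarev maps is the type-$(3^k,3^k-1)$ Zolotarev function for the original gap. The first claim with $2n+1=3^k$ then bounds its error, and requiring $3^k\gtrsim\log(1/\varepsilon)\log(1/\delta)$ gives the depth.

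Your version is the more solid one. The paper's blanket statement cannot hold for arbitrary rationals: a constant-width network of depth $L$ has only $\mathcal{O}(L)$ parameters, while type-$(N,N)$ rationals form a $(2N+1)$-dimensional family. The $\mathcal{O}(\log N)$ depth really comes from the composition structure of Zolotarev functions, which the paper uses explicitly only in its ReLU proof. What you give up is arbitrary degree, since you only reach degrees $3^k$; this costs only a constant in depth. Your ``alternatively'' sentence is already a complete argument, and the two-phase picture is a heuristic restatement of it.

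There are two slips to fix.

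First, the constant chase for the first claim is inverted. With $\kappa=\delta$, your formula $\log\rho=\pi K(\kappa')/(4K(\kappa))$ tends to $+\infty$ as $\delta\to0$, which cannot be right, because the approximation must degrade as the gap closes. The two estimates you list also bound $\log\rho$ from above rather than below. Moreover, $K(\sqrt{1-\delta^2})\le\log(4/\delta)$ is false; in fact $K(r)>\log(4/\sqrt{1-r^2})$ on $(0,1)$. The integral of modulus $\sqrt{1-\delta^2}$, which grows like $\log(4/\delta)$, belongs in the denominator. The estimate you need is on the ratio, $\tfrac{\pi}{2}K(\sqrt{1-\delta^2})/K(\delta)<\log(4/\delta)$ (the Gr\"otzsch modulus inequality). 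Alternatively, simply cite the final bound, as the paper does.

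Second, in phase (i), an optimally scaled type-$(3,2)$ step roughly takes a cube root of the conditioning $1/\delta$, dividing $\log(1/\delta)$ by about $3$; it does not square it. That is what produces the $\mathcal{O}(\log\log(1/\delta))$ step count.
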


{This lemma enables two complementary rational implementations of the
activation gates below: a direct rational gate of the stated size
for the shallow SAE encoder, and a constant-width deep rational realization as 
considered in the previous rational network approximation results for continuous 
activation functions such as ReLU \cite{boulle2020rational}
and GeLU \cite{tang2026rational}.} To provide analysis for JumpReLU and TopK, 
we denote:
{\small\begin{align}
\text{(ReLU)}\quad
   &\vz_{\mathrm{R}}(\vh) = \text{ReLU}(\vh)=\vh\odot H(\vh)=\vh\odot \frac{\text{sign}(\vh)+1}{2}, \\[2pt]
   \text{(JumpReLU)}\quad
   &\vz_{\mathrm{J}}(\vh) = \text{JumpReLU}(\vh) = \vh\odot H(\vh-\vtheta)= \vh\odot \frac{\text{sign}(\vh-\vtheta)+1}{2}, \\[2pt]
  \text{(supplied-threshold TopK gate)}\quad
  &\vz_{\mathrm{T}}(\vh;\tau_k)= \text{TopK}(\vh;\tau_k) = \vh\odot\frac{\text{sign}(\vh-\tau_k)+1}{2},
\end{align}}
where $H$ is the Heaviside function, $\vtheta\in(\mathbb{R}^{+})^{d_{\mathrm{sae}}}$
is the per-feature threshold. \yy{For TopK, let $h_{(1)}\ge\cdots\ge h_{(d_{\mathrm{sae}})}$ denote the sorted pre-activations, with $1\le k<d_{\mathrm{sae}}$. We use $\tau_k$ for a supplied separating threshold satisfying $h_{(k+1)}<\tau_k<h_{(k)}$; under margin $\delta$, this means $h_{(k+1)}+\delta\le\tau_k\le h_{(k)}-\delta$. Thus $\tau_k$ is not the literal $k$-th largest entry, which would lie on the discontinuity.} Thus $\vz_{\mathrm{T}}(\vh;\tau_k)$ is the thresholded gate equivalent to TopK conditional on the supplied threshold; our rational approximation result does not include the order-statistic computation that obtains $\tau_k$ from $\vh$.

For the ReLU activation, we use the approximation theorem of
\citet{boulle2020rational}:
\begin{theorem}[Rational approximation of ReLU~\citet{boulle2020rational}]\label{thm:relu}
For every $0<\varepsilon<1$, there exists a scalar rational function
$R_\varepsilon:[-1,1]\to[-1,1]$ of size
\[
  \mathcal{O}\!\Big(\log^2(1/\varepsilon)\Big),
\]
such that
\[
   \sup_{x\in [-1,1]}
  \big| R_\varepsilon(x)-\mathrm{ReLU}(x) \big|
   \;\le\;\varepsilon.
\]
Consequently, the ReLU activation block can be replaced in either of two
implementations. First, $R_\varepsilon$ can be applied coordinatewise as a
trainable rational activation, with scalar size
$\mathcal{O}\!\big(\log^2(1/\varepsilon)\big)$. Second, the same scalar map can
be realized by a constant-width deep rational network of internal depth
\[
   M_R \;=\; \mathcal{O}\!\Big(\log\log(1/\varepsilon)\Big).
\]
Under either implementation, the resulting activation block
$\mathcal{R}_R:[-1,1]^{d_{\mathrm{sae}}}\to\mathbb{R}^{d_{\mathrm{sae}}}$
satisfies
\[
   \sup_{\vh\in [-1,1]^{d_{\mathrm{sae}}}}
  \big\lVert\mathcal{R}_R(\vh)-\vz_{\mathrm R}(\vh)\big\rVert_\infty
   \;\le\;\varepsilon.
\]
\end{theorem}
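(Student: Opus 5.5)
The plan is to reduce everything to Lemma~\ref{lem:zolotarev} through the identity $\mathrm{ReLU}(x)=\tfrac{x}{2}\big(1+\mathrm{sign}(x)\big)$, and then to handle the loss of the gap near the origin using the factor $x$. Fix $\varepsilon\in(0,1)$ and set $\delta:=\varepsilon$. Lemma~\ref{lem:zolotarev} gives a rational $s=s_{n,\delta}$ of type $(2n+1,2n)$ with $\sup_{E_\delta}|\mathrm{sign}-s|\le \eta:=4\exp(-\pi^2 n/\log(4/\delta))$. Define
\[
R_\varepsilon(x):=\tfrac{x}{2}\big(1+s(x)\big),
\]
which is rational of type $(2n+2,2n)$.

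\textbf{Error on the two regions.} On $E_\delta$ we have $|R_\varepsilon(x)-\mathrm{ReLU}(x)|=\tfrac{|x|}{2}|s(x)-\mathrm{sign}(x)|\le \eta/2$. On the gap $|x|<\delta$ we need a uniform bound on $s$ itself. Here I would use the concrete Zolotarev/Newman-type construction behind Lemma~\ref{lem:zolotarev}. That construction is odd and monotone on $[0,1]$, with $0\le s\le 1+\eta$ there; alternatively one can compose with a bounded rescaling. With this bound, $|R_\varepsilon(x)|\le \tfrac{\delta}{2}(2+\eta)$ and $\mathrm{ReLU}(x)\le\delta$, so the error is $\mathcal{O}(\delta)=\mathcal{O}(\varepsilon)$. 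Choosing $n\asymp \log(1/\varepsilon)\log(4/\varepsilon)/\pi^2$ makes $\eta\le\varepsilon$. The size is then $\mathcal{O}(\log^2(1/\varepsilon))$, and rescaling $\varepsilon$ by a constant gives the stated bound. To ensure $R_\varepsilon$ maps $[-1,1]$ into $[-1,1]$, I would clip through a final factor $1/(1+\varepsilon)$, which costs at most another $\varepsilon$.

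\textbf{Main obstacle: the gap region.} Lemma~\ref{lem:zolotarev} only controls $s$ on $E_\delta$, and a generic rational could blow up inside $(-\delta,\delta)$. The step I expect to need care is therefore certifying $\sup_{|x|<\delta}|s(x)|=\mathcal{O}(1)$. If the structural properties of the Zolotarev minimax solution (odd, pole-free on the real line, equioscillating with values in $[-1-\eta,1+\eta]$ on $[-1,1]$) cannot be cited directly, I would fall back on Newman's explicit approximant of $|x|$. That approximant has size $\mathcal{O}(\log^2(1/\varepsilon))$ with the needed bound on all of $[-1,1]$, and $\mathrm{ReLU}(x)=(x+|x|)/2$ gives the result immediately; this is essentially the route of \citet{boulle2020rational}.

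\textbf{Deep realization and the block statement.} For the deep version I would take the constant-width network of depth $\mathcal{O}(\log\log(1/\varepsilon)+\log\log(1/\delta))$ from Lemma~\ref{lem:zolotarev}, again with $\delta=\varepsilon$, giving depth $\mathcal{O}(\log\log(1/\varepsilon))$. One extra rational layer then implements the multiplication $x\cdot(1+s)/2$; this requires carrying $x$ through a width-two identity channel, since a product is a type-$(2,0)$ rational in two inputs realizable with constant width. The same gap argument applies, because iterated Zolotarev compositions remain bounded by $1+\eta$. Finally, the block $\mathcal{R}_R$ applies the scalar map coordinatewise, so the $\ell_\infty$ bound over $[-1,1]^{d_{\mathrm{sae}}}$ is the scalar sup bound.
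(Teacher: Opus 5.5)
Your proposal is correct and follows essentially the paper's route: write $\mathrm{ReLU}(x)=\tfrac{x}{2}\bigl(1+s(x)\bigr)$ with a Zolotarev sign approximant whose gap is tuned to $\varepsilon$, giving degree $\mathcal{O}(\log^2(1/\varepsilon))$; realize it deeply as a composition of type-$(3,2)$ Zolotarev maps, giving depth $\mathcal{O}(\log\log(1/\varepsilon))$; and rescale by $1/(1+\eta)$ to land in $[-1,1]$. The only difference is that you make explicit the gap-region bound $0\le s\le 1+\eta$ on $[0,1]$, which the paper instead absorbs into its cited root-exponential estimate for $\big||t|-t\,s_m(t)\big|$ from \citet{boulle2020rational}; note that this bound comes from monotonicity of $s$ on $[0,\delta]$ only, not on all of $[0,1]$, since $s$ equioscillates on $[\delta,1]$.
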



JumpReLU is discontinuous at $h_i=\theta_i$, so uniform approximation is only
meaningful on a domain bounded away from the jump. We therefore fix a
\emph{margin} $\delta>0$ and define
$\Omega_\delta := \{\vh\in[-1,1]^{d_{\mathrm{sae}}} :
|h_i-\theta_i|\ge\delta,\;\forall i\}$ \footnote{This is the standard domain
restriction needed for uniform approximation of a discontinuous threshold
map: without excluding a $\delta$-neighbourhood of the jump, no continuous
or rational approximant can achieve arbitrarily small uniform error.  In
applications, $\delta$ should therefore be interpreted as a lower bound on
the threshold margin of the pre-activations under consideration.}. We then have the approximation results on $\Omega_\delta$:

\begin{theorem}[Rational approximation of JumpReLU]\label{thm:jumprelu}
For every $0<\varepsilon<1$, the JumpReLU activation block on
$\Omega_\delta$ can be replaced in either of two implementations. First, each
coordinate map can be implemented directly as a trainable scalar rational
activation of size
\[
  \mathcal{O}\!\Big(\log(1/\varepsilon)\log(1/\delta)\Big),
\]
with constants depending only on the fixed threshold scale. Second, each
coordinate map can be realized by a constant-width deep rational network of
internal depth
\[
   M_J \;=\; \mathcal{O}\!\Big(\log\log(1/\varepsilon)+\log\log(1/\delta)\Big)
\]
and per-coordinate size $\mathcal{O}(M_J)$. Under either implementation, the
resulting activation block
$\mathcal{R}_J:[-1,1]^{d_{\mathrm{sae}}}\to\mathbb{R}^{d_{\mathrm{sae}}}$ satisfies
\[
   \sup_{\vh\in\Omega_\delta}
   \big\lVert\mathcal{R}_J(\vh)-\vz_{\mathrm{J}}(\vh)\big\rVert_\infty
   \;\le\;\varepsilon.
\]
\end{theorem}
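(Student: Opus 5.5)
The plan is to reduce each coordinate of the JumpReLU block to the sign approximation of Lemma~\ref{lem:zolotarev}, using the identity $z_{\mathrm J}(h_i)=h_i\,\frac{\mathrm{sign}(h_i-\theta_i)+1}{2}$. Fix a coordinate $i$ and write $u=h_i-\theta_i$. On $\Omega_\delta$ we have $|u|\ge\delta$ and $|u|\le 1+\theta_i\le 1+\Theta$, where $\Theta:=\max_i\theta_i$ is the fixed threshold scale. Rescale to $x=u/(1+\Theta)$. Then $x\in E_{\delta'}$ with $\delta'=\delta/(1+\Theta)$, and $\log(1/\delta')=\mathcal{O}(\log(1/\delta))$, with constants depending only on $\Theta$.

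Define the candidate scalar gate
\[
  g_i(h)=h\cdot\frac{s_{n,\delta'}\bigl((h-\theta_i)/(1+\Theta)\bigr)+1}{2}.
\]
This is still a rational function of $h$: composing with an affine map and multiplying by $h$ raises the type only to $(2n+2,2n)$. Its error is
\[
  |g_i(h)-z_{\mathrm J}(h)|=\tfrac{|h|}{2}\,\bigl|s_{n,\delta'}(x)-\mathrm{sign}(x)\bigr|\le 2\exp\!\bigl(-\pi^2 n/\log(4/\delta')\bigr),
\]
using $|h|\le 1$. Choosing $n=\lceil \pi^{-2}\log(4/\delta')\log(2/\varepsilon)\rceil$ makes this at most $\varepsilon$. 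The size is then $\mathcal{O}(\log(1/\varepsilon)\log(1/\delta))$. Taking the maximum over coordinates gives the $\|\cdot\|_\infty$ bound on $\Omega_\delta$, since each coordinate's error depends only on $h_i$ and the constraint $|h_i-\theta_i|\ge\delta$.

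For the deep implementation, invoke the deep part of Lemma~\ref{lem:zolotarev} with target accuracy $\varepsilon$ and margin $\delta'$. This gives a constant-width rational network $S$ of depth $\mathcal{O}(\log\log(1/\varepsilon)+\log\log(1/\delta))$ with $|S(x)-\mathrm{sign}(x)|\le\varepsilon$ on $E_{\delta'}$. Prepend the affine map $h\mapsto (h-\theta_i)/(1+\Theta)$ and carry $h$ alongside in a parallel identity channel. Identity is exactly rational, so this adds $\mathcal{O}(1)$ width. At the end, form $h\cdot(S+1)/2$. The product of two channels is realizable exactly by one rational layer, for example via $ab=\frac{(a+b)^2-(a-b)^2}{4}$ using the quadratic $t^2$, which is a rational activation. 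The error bound is the same as above, and the per-coordinate size is $\mathcal{O}(M_J)$ because the width is constant.

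The main obstacle is the bookkeeping around the deep construction. First, the intermediate values must stay bounded, since the identity channel and $S$ need bounded ranges so that the final multiplication does not amplify error. This is handled because $|h|\le 1$ and $|S|\le 1+\varepsilon$. Second, the depth bound from Lemma~\ref{lem:zolotarev}, which comes from composing low-degree Zolotarev or Newton-type iterations, must be applied at margin $\delta'$ rather than $\delta$. This changes it only by constants depending on $\Theta$, which is exactly why the statement says the constants depend on the fixed threshold scale.
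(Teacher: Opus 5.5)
Your proposal is correct and takes essentially the same route as the paper's proof. Both rescale $h_i-\theta_i$ by $1+\|\vtheta\|_\infty$ so that it lands in $E_{\delta/(1+\|\vtheta\|_\infty)}$, apply the direct or deep part of the Zolotarev sign lemma, form $h_i\,(1+s)/2$, and realize the final product exactly via $xy=\frac14[(x+y)^2-(x-y)^2]$. The only cosmetic difference is that you use the explicit Zolotarev error constant to choose $n$ with $\log(2/\varepsilon)$, whereas the paper applies the lemma at accuracy $\varepsilon$ and keeps the resulting $\varepsilon/2$ slack.
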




For TopK, \yy{fix $1\le k<d_{\mathrm{sae}}$ and} consider a simplified setting in which the pretrained teacher supplies the scalar threshold $\tau_k$ that determines the active support. \yy{Here $\tau_k$ is a separating threshold between the $k$-th and $(k+1)$-st order statistics, not the $k$-th activation itself; for example, one may take $\tau_k=(h_{(k)}+h_{(k+1)})/2$ when $h_{(k)}-h_{(k+1)}\ge2\delta$.} The result below therefore approximates the thresholded gate equivalent to TopK conditional on a supplied threshold, not the full TopK operator itself. Similar to the JumpReLU case, we require a margin-separated domain \yy{with sorted coordinates $h_{(1)}\ge\cdots\ge h_{(d_{\mathrm{sae}})}$,}
\[\yy{
\Omega^{\mathrm{T}}_\delta :=
  \big\{(\vh,\tau_k)\in[-1,1]^{d_{\mathrm{sae}}}\times[-1,1] :
  h_{(k)}-\tau_k\ge\delta,\;\tau_k-h_{(k+1)}\ge\delta\big\},}
\]
and obtain:
\begin{theorem}[Rational approximation of supplied-threshold TopK gate]\label{thm:topk}
Suppose the scalar threshold $\tau_k\in[-1,1]$
is supplied together with each pre-activation vector \yy{and satisfies the separating margin condition above}. For every
$0<\varepsilon<1$, the supplied-threshold TopK gate on
$\Omega^{\mathrm T}_\delta$ can be replaced in either of two implementations.
First, each coordinate map can be implemented directly as a trainable scalar
rational activation of size
\[
  \mathcal{O}\!\Big(\log(1/\varepsilon)\log(1/\delta)\Big).
\]
Second, each coordinate map can be realized by a constant-width deep rational
network of internal depth
\[
  M_T \;=\; \mathcal{O}\!\Big(\log\log(1/\varepsilon)+\log\log(1/\delta)\Big).
\]
Under either implementation, the resulting network
$\mathcal{R}_T:[-1,1]^{d_{\mathrm{sae}}+1}\to\mathbb{R}^{d_{\mathrm{sae}}}$
satisfies
\[
  \sup_{(\vh,\tau_k)\in\Omega^{\mathrm{T}}_\delta}
  \big\lVert \mathcal{R}_T(\vh,\tau_k)-\vz_{\mathrm{T}}(\vh;\tau_k)\big\rVert_\infty
   \;\le\;\varepsilon.
\]
\end{theorem}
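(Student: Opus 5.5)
The plan is to reduce the gate, coordinate by coordinate, to the sign approximation of Lemma~\ref{lem:zolotarev}, exactly as for JumpReLU but with the threshold $\tau_k$ taken as an input rather than a fixed parameter. Write $u_i := h_i-\tau_k\in[-2,2]$. The gate is
\[
z_{\mathrm T,i}=h_i\,\tfrac{\mathrm{sign}(u_i)+1}{2}.
\]
On $\Omega^{\mathrm T}_\delta$, every coordinate satisfies $|u_i|\ge\delta$. For $i$ among the top $k$ we have $h_i\ge h_{(k)}\ge\tau_k+\delta$, and otherwise $h_i\le h_{(k+1)}\le\tau_k-\delta$. Ties at the boundary are excluded by the margin, so the sign pattern is exactly the TopK support. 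The rescaled variable $u_i/2$ therefore lies in $E_{\delta/2}=[-1,-\delta/2]\cup[\delta/2,1]$.

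For the direct implementation, I will take $s:=s_{n,\delta/2}$ from Lemma~\ref{lem:zolotarev} with error $\eta$ on $E_{\delta/2}$. I then define the coordinate map
\[
\mathcal R_{T,i}(\vh,\tau_k):=h_i\,\frac{s\big((h_i-\tau_k)/2\big)+1}{2}.
\]
Because $|h_i|\le1$, the error satisfies
\[
|\mathcal R_{T,i}-z_{\mathrm T,i}|\le\tfrac12|h_i|\,|s-\mathrm{sign}|\le\eta/2.
\]
Choosing $\eta=2\varepsilon$, or simply $\eta=\varepsilon$, gives the bound. The degree needed is $n=\mathcal O(\log(1/\varepsilon)\log(4/\delta))$, and $\log(8/\delta)=\mathcal O(\log(1/\delta))$ for $\delta<1$. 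Multiplying by $h_i$ and forming the affine combination raise the numerator degree by one, so the size is $\mathcal O(\log(1/\varepsilon)\log(1/\delta))$.

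This map is a rational function of the affine pre-activation $h_i-\tau_k$, multiplied by $h_i$. It is therefore a trainable scalar rational activation applied after an input shift by $\tau_k$, which is the same sense in which the JumpReLU gate uses $\theta_i$. I will note explicitly that the product with $h_i$ is part of the activation. Equivalently, the activation is the bivariate rational function $(h,u)\mapsto h(s(u/2)+1)/2$, or we can write $h_i=u_i+\tau_k$ to keep it univariate in $u_i$ with the parameter $\tau_k$.

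For the deep implementation, I invoke the second part of Lemma~\ref{lem:zolotarev}. It gives a constant-width rational network of depth $\mathcal O(\log\log(1/\varepsilon)+\log\log(2/\delta))$ approximating $\mathrm{sign}$ on $E_{\delta/2}$ to error $\varepsilon$. I prepend the affine map $(h_i,\tau_k)\mapsto(h_i-\tau_k)/2$ and carry $h_i$ along in a spare channel using an identity rational unit, which keeps the width constant. I then append one final layer computing $h_i(\sigma+1)/2$. The product can be realized exactly with a constant number of rational units, for instance via $xy=\tfrac14((x+y)^2-(x-y)^2)$ with the quadratic activation $t^2$. This adds only $\mathcal O(1)$ depth, so $M_T$ is as stated.

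The bound in the $\ell_\infty$ norm follows coordinatewise, since each coordinate's error is at most $\varepsilon/2\le\varepsilon$ uniformly over $\Omega^{\mathrm T}_\delta$.

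The main obstacle is bookkeeping rather than analysis. I must check that the domain condition on order statistics really forces $|h_i-\tau_k|\ge\delta$ for every $i$, which uses $h_{(k)}\ge h_i$ or $h_i\ge h_{(k+1)}$. I must also justify the rescaling from $[-2,2]$ to $[-1,1]$ and the constant-width product step without hiding extra depth. Everything else is inherited from Lemma~\ref{lem:zolotarev}, essentially verbatim from the JumpReLU argument.
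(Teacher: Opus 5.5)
Your proposal is correct and follows the paper's proof essentially step for step. Both arguments set $t_i=h_i-\tau_k$, use the order-statistic margin to obtain $|t_i|\in[\delta,2]$, and apply Lemma~\ref{lem:zolotarev} to $t_i/2$ (margin $\delta/2$). Both then multiply the gate $(1+s)/2$ by $h_i$ to get error at most $\varepsilon/2$, with the affine shift and the product costing only one numerator degree in the direct form or $\mathcal{O}(1)$ extra depth in the deep form, and your explicit handling of the $\delta/2$ rescaling and of carrying $h_i$ in a spare channel just spells out what the paper leaves implicit.
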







We now turn to the converse question of approximating rational functions:
\begin{theorem}[Lower bound for ReLU/JumpReLU/TopK networks]\label{thm:lower}\label{prop:lower-piecewise}
{Fix $\eta\in(0,1/2)$ and define the rational target}
\[
  {\mathcal{R}^\star_\eta(x):=\frac{\eta^2}{x^2+\eta^2},
  \qquad x\in[-1,1].}
\]
{This target satisfies
$\mathcal{R}^\star_\eta:[-1,1]\to[0,1]$ and can be realized with
$\mathcal{O}(1)$ rational parameters. Then any scalar map}
$\mathcal{S}:[-1,1]\to[0,1]$ realized by a
ReLU/JumpReLU/supplied-threshold TopK network and satisfying
\[
   \lVert\mathcal{S}-\mathcal{R}^\star_\eta\rVert_{L^\infty([-1,1])}\le\varepsilon
\]
must satisfy $P=\Omega(\log(1/\varepsilon))$, where $P$ is the number of
trainable parameters. If $\mathcal{S}$ is realized by the scalar-output version
of the single-layer encoder architecture in~\eqref{eq:sae-skel} with $N$
activated coordinates, then it must satisfy
$N=\Omega(\varepsilon^{-1/2})$.
\end{theorem}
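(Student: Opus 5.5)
The argument rests on one fact: every map $\mathcal{S}$ realized by a ReLU/JumpReLU/supplied-threshold TopK network is piecewise affine on $[-1,1]$. The only non-affine operations are ReLU kinks and the Heaviside jumps of JumpReLU and the thresholded gate, and the target $\mathcal{R}^\star_\eta$ is strictly curved near $0$. Both bounds then reduce to counting how many affine pieces $\mathcal{S}$ can have and how many it needs. Realizability of $\mathcal{R}^\star_\eta$ with $\mathcal{O}(1)$ parameters is immediate: it is a type-$(0,2)$ rational with numerator $\eta^2$ and denominator $x^2+\eta^2$, which never vanishes, and its range is $[0,1]$ since $x^2\ge 0$.

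\textbf{Step 1 (pieces needed).} Assume $\mathcal{S}$ is affine on an interval $I$ of length $\ell$ inside $[-\eta,\eta]$, and let $\kappa:=\min_{|x|\le\eta}|(\mathcal{R}^\star_\eta)''(x)|$. Direct computation gives $(\mathcal{R}^\star_\eta)''(x)=\eta^2(6x^2-2\eta^2)/(x^2+\eta^2)^3$, which vanishes at $|x|=\eta/\sqrt3$. So I will restrict to the subinterval $|x|\le\eta/2$, where $\kappa\ge c\,\eta^{-2}$. By the standard result for linear interpolation, the best uniform affine approximation of a function with $|f''|\ge\kappa$ on an interval of length $\ell$ has error at least $\kappa\ell^2/16$. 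Requiring this to be at most $\varepsilon$ gives $\ell\lesssim \eta\sqrt{\varepsilon}$. Jumps do not help, because a jump only starts a new piece. Hence $\mathcal{S}$ needs at least $\Omega(\eta^{-1}\cdot\eta/\sqrt{\varepsilon})=\Omega(\varepsilon^{-1/2})$ pieces on $[-\eta/2,\eta/2]$, with the $\eta$-dependence absorbed into constants for fixed $\eta$.

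\textbf{Step 2 (pieces available, single layer).} For the scalar-output single-layer encoder, $\mathcal{S}(x)=\sum_{j=1}^N w_j\,\phi_j(a_jx+b_j)+c$, where each $\phi_j$ is a ReLU, a JumpReLU, or a gate with a fixed supplied threshold. Each summand has at most one breakpoint, either a kink or a jump, so $\mathcal{S}$ has at most $N+1$ affine pieces. Combining this with Step 1 gives $N+1\ge c\,\varepsilon^{-1/2}$. One subtlety: if $\tau_k$ is allowed to depend on $x$, the gate is no longer a fixed scalar map. I will take the supplied threshold as a fixed scalar, as in the statement, so each coordinate still contributes a single breakpoint.

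\textbf{Step 3 (parameter bound, general networks).} For deep networks I would use the Telgarsky/Yarotsky-type bound: a piecewise-affine network with $P$ parameters and depth $L$ has at most $(cP/L)^{L}\le 2^{\mathcal{O}(P)}$ pieces, and Heaviside gates likewise add at most one breakpoint per unit per layer. Setting $2^{\mathcal{O}(P)}\ge c\,\varepsilon^{-1/2}$ gives $P=\Omega(\log(1/\varepsilon))$. This is the step I expect to be the main obstacle. The difficulty is checking that JumpReLU and the thresholded gate, whose outputs are discontinuous, still fit the piece-counting recursion. Composing a discontinuous gate with a piecewise-affine input with $m$ pieces creates at most $m$ new breakpoints per unit, exactly as a ReLU does, so the induction on layers should go through unchanged. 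I will write that induction carefully.
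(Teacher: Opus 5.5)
Your proposal is correct and follows essentially the same route as the paper: uniform concavity of $\mathcal{R}^\star_\eta$ on $[-\eta/2,\eta/2]$ with $|(\mathcal{R}^\star_\eta)''|\ge 32/(125\eta^2)$, and a $\kappa\ell^2/16$ lower bound on the affine approximation error, which the paper proves directly by a secant--midpoint argument. The counting steps also match: at most $N+1$ pieces for the single-layer encoder with the supplied threshold held fixed, and a Telgarsky-type piece count for the deep case, where the paper bounds the number of hidden units $M$ via $(M/L)^L$ and then uses $P\ge M$ rather than bounding pieces directly by $2^{\mathcal{O}(P)}$.
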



{Together, Lemma~\ref{lem:zolotarev} and
Theorems~\ref{thm:relu}--\ref{thm:lower} show an expressive asymmetry in the
SAE encoder setting: trainable rational activations give compact approximations
to the fixed gates used by current SAEs, whereas scalar-output single-layer
piecewise-affine encoders can require polynomially many activated coordinates
for simple rational targets. This supports the expectation that RSAEs can
provide better reconstruction fidelity at matched sparsity.}





\section{Practical Algorithm}
\label{sec:proposed-algorithm}


As indicated by our analysis, the rational activation function 
is guaranteed to provide a better approximation power given a 
teacher model and supplied threshold. The RSAE is then constructed in two steps: (i) an \emph{initialization procedure} that produces high-quality rational coefficients $(\ab, \bb)$ and learnable scales $(C_{\mathrm{in}}, C_{\mathrm{out}})$ by first fitting a rational function on a bounded interval of synthetic data and then adapting the rational activation to the teacher SAE's pre-activation distribution; and (ii) a \emph{fine-tuning procedure} that jointly optimises all parameters, including the encoder and decoder weights, under the standard $\ell_1$-regularised reconstruction objective.

\textbf{Step 1: RSAE Initialization Procedure.} 
Let $\phi^{\text{teacher}} \in \{\mathrm{ReLU},\, \mathrm{JumpReLU}_{\theta},\, \mathrm{TopK}_k\}$ denote any of the activation primitives used by the baseline SAE families considered here, and let $\{(t_\ell, y_\ell)\}_{\ell=1}^{N}$ be a uniform dense grid on $[-1, 1]$ with $y_\ell = \phi^{\text{teacher}}(t_\ell)$. We first fit a rational function on this bounded interval to obtain coefficients that approximate the teacher activation to high accuracy. To this end, we employ the \emph{relaxed Remez exchange} of~\citet{chen2018rational}, an iterative procedure that alternates a linearised coefficient solve with a node-exchange step until the residual equioscillates, to solve the min--max objective
{\small\begin{equation}
(\ab^\ast, \bb^\ast)
\;=\;
\arg\min_{\ab, \bb}\;
\max_{t \in [-1, 1]}\;
\bigl|\, r_{(\ab, \bb)}(t) \,-\, \phi^{\text{teacher}}(t) \,\bigr|.
\label{eq:fit-general}
\end{equation}}
The full algorithmic details, including the linearised system \eqref{eq:remez-linearised} solved at each outer iteration, are deferred to Appendix~\ref{appx:remez}. Remez returns the standard-Pad\'e coefficients used directly in~\eqref{eq:rsae-act}; this synthetic fitting is performed once per teacher activation and the resulting coefficients $(\ab^\ast, \bb^\ast)$ can be tabulated for reuse.

Given a pre-trained baseline SAE with weights $\{\widetilde{\mWenc}, \widetilde{\vbenc}, \widetilde{\mWdec}, \widetilde{\vbdec}\}$ and the rational coefficients $(\ab^\ast, \bb^\ast)$, we then adapt the rational activation to the teacher's pre-activation distribution by minimising
\begin{equation}
(\widetilde{\ab}, \widetilde{\bb}, \widetilde{C}_{\mathrm{in}}, \widetilde{C}_{\mathrm{out}})
\;=\;
\arg\min_{\ab, \bb, C_{\mathrm{in}}, C_{\mathrm{out}}}\;
\bigl\| \phi(\vh;\, \ab, \bb, C_{\mathrm{in}}, C_{\mathrm{out}}) \,-\, \phi^{\mathrm{teacher}}(\vh) \bigr\|_2^2,
\label{eq:init-distill}
\end{equation}
where $\phi(\vh;\, \ab, \bb, C_{\mathrm{in}}, C_{\mathrm{out}})$ 
is computed via \eqref{eq:rsae-act} and 
$\vh = \widetilde{\mWenc}\,(\vx - \widetilde{\vbdec}) + \widetilde{\vbenc}$ 
is the teacher's pre-activation. Combined with the inherited 
encoder and decoder weights, the learned coefficients and scales 
allow the RSAE to approximately reproduce the teacher's output, up to the Step~1 approximation error.

\textbf{Step 2: RSAE Fine-Tuning Procedure.} 
We initialise the RSAE encoder and decoder with the teacher weights $\{\widetilde{\mWenc}, \widetilde{\vbenc}, \widetilde{\mWdec}, \widetilde{\vbdec}\}$, and the rational activation with the learned coefficients $(\widetilde{\ab}, \widetilde{\bb})$ and scales $(\widetilde{C}_{\mathrm{in}}, \widetilde{C}_{\mathrm{out}})$. We then unfreeze all parameters
$\Theta \coloneqq \{\mWenc,\, \vbenc,\, \mWdec,\, \vbdec,\, \ab,\, \bb,\, C_{\mathrm{in}},\, C_{\mathrm{out}}\}$
and minimise the $\ell_1$-regularised objective
\begin{equation}
    \min_{\Theta}\;\;
    \mathbb{E}_{\vx \sim \mathcal{D}}
    \Bigl[\,
      \bigl\| \vx - \vxhat(\vx;\, \Theta) \bigr\|_2^{2}
      \;+\;
      \lambda\,
      \bigl\| \vz(\vx;\, \Theta) \bigr\|_1
    \,\Bigr].
    \label{eq:rsae-objective}
\end{equation}


\section{Empirical Results}\label{sec:experiments}


\subsection{Rational Function Approximation Performance on Synthetic Data}
\label{sec:synthetic}


\noindent\textbf{Setup.}
We evaluate the three rational coefficient fitting procedures of \S\ref{sec:proposed-algorithm} on the activation primitives used by
the baseline SAE families considered here.  Each fitting procedure is run on a uniform, dense grid of $N = 4001$ points over $[-1, 1]$ with target activation functions $\mathrm{ReLU}$
and $\mathrm{JumpReLU}$. We report the mean-squared error (MSE) of the fitted rational functions. In particular, we run the relaxed
Remez exchange in both the standard-Pad\'e form (identical to the original formulation) and the safe-Pad\'e form. We additionally fit the safe-Pad\'e
coefficients directly under the $L^2$ and smoothed-$L^\infty$
surrogates as baselines. We deliberately omit $\mathrm{TopK}$ from the synthetic
study because, conditioned on a given input batch, $\mathrm{TopK}$
is pointwise equivalent to a supplied-threshold JumpReLU-type gate with a
\yy{sample-dependent separating threshold $\tau_k$ between the $k$-th and $(k+1)$-st largest pre-activations, e.g. their midpoint when the TopK gap is positive.} Therefore any rational that approximates the $\mathrm{JumpReLU}$
family uniformly over the threshold also approximates
$\mathrm{TopK}$ on the corresponding batch.
\begin{figure}[hpt]
  \centering
  \includegraphics[width=\textwidth]{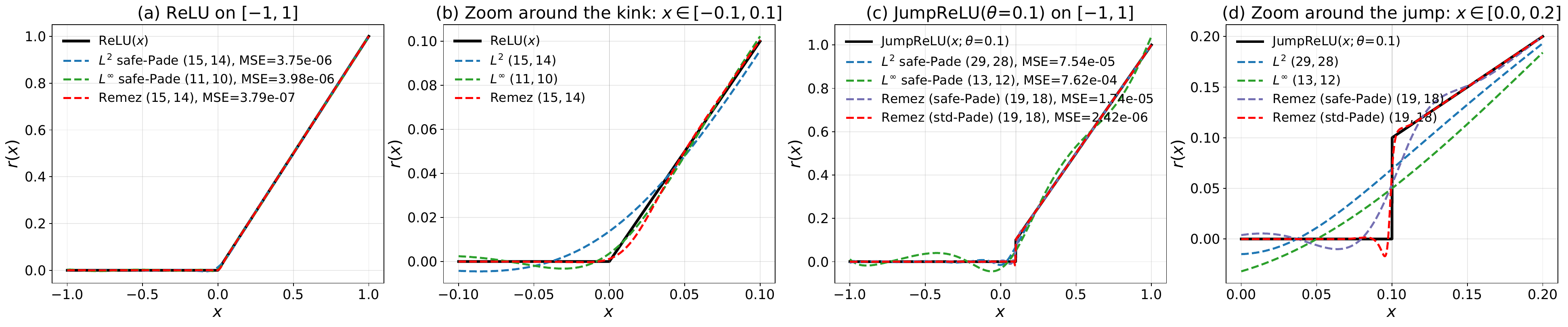}
  \vspace{-0.5cm}
  \caption{{\small\textbf{Rational approximation of SAE activation primitives (ReLU and JumpReLU) on $[-1, 1]$.} Best-MSE rational fits of $\mathrm{ReLU}$ (figure (a) and figure (b)) and $\mathrm{JumpReLU}$ with $\theta = 0.1$ (figure (c) and figure (d)) under three procedures: the relaxed Remez exchange (\textcolor{red}{red} for standard-Pad\'e and \textcolor{violet}{purple} for safe-Pad\'e), the $L^2$ fit (\yy{blue}), and the smoothed $L^\infty$ fit (\textcolor{green!50!black}{green}).  Figure (b) and figure (d) zoom into the kink and the jump, respectively.  Each curve uses the optimal $(p, q)$ for its procedure.  Across figure (a) - (d), the Remez fit is visually indistinguishable from the teacher activations, validating the universal-approximation claim.}}
\label{fig:approx}
\end{figure}

\noindent\textbf{Approximation precision.}
Figure~\ref{fig:approx} shows that the Remez procedure fits both
$\mathrm{ReLU}$ and $\mathrm{JumpReLU}$ to high precision on
$[-1, 1]$: even in the immediate neighbourhood of the kink
(Figure~\ref{fig:approx}(b)) and the jump
(Figure~\ref{fig:approx}(d)), the fits are visually
indistinguishable from the teacher. Remez reaches a low MSE of $3.8\times10^{-7}$ on $\mathrm{ReLU}$ at
type $(15, 14)$ and $2.4\times10^{-6}$ on $\mathrm{JumpReLU}$ with
$\theta=0.1$ at type $(19, 18)$, outperforming the $L^2$ and $L^\infty$ approaches. While Figure~\ref{fig:approx}(c) shows the fit for JumpReLU with $\theta=0.1$, we additionally perform ablation studies on JumpReLU with larger discontinuities $\theta \in \{0.2, \ldots, 0.5\}$ (Table~\ref{tab:approx},
Appendix~\ref{appx:analysis}); fitting performance remains consistent across discontinuities.


\noindent\textbf{Choice of degrees.}
To choose a suitable degree for our algorithm, we perform an ablation over
$(p, q)$ for all four procedures; the results are reported in
Figure~\ref{fig:approx-degree} (Appendix~\ref{appx:analysis}),
which plots MSE against the numerator degree $p$. In general, Remez first exhibits the expected near-exponential decay; numerical conditioning of the
linearised system \eqref{eq:remez-linearised} then dominates, and the
curve flattens or oscillates. Empirically, a single low-degree rational ($(3,2)$ for ReLU, $(9,8)$ for JumpReLU and TopK) is expressive enough to reproduce every activation primitive used by current SAE baselines to within numerical
precision.



\subsection{Rational SAE Performance}


\noindent\textbf{Models, SAEs, and Evaluation Metrics.} We evaluate the RSAE on residual-stream
activations from three open-weight language models of various sizes: \textbf{GPT-2
small}, \textbf{Pythia-160m-deduped}, and
\textbf{Gemma-2-2B}.  Teacher
SAEs are taken from publicly released checkpoints: GPT-2 small from
Bloom~\citep{bloom2024gpt2sae}'s \texttt{gpt2-small-res-jb} (ReLU) and the OpenAI-v5~\citep{gao2025scaling}
\texttt{gpt2-small-resid-post-v5-32k} (TopK); Pythia-160m and Gemma-2-2B
from SAEBench~\citep{karvonen2025saebench} for ReLU, JumpReLU, and TopK. Following the standard protocol, we evaluate our RSAE against SAE baselines in terms of five metrics: (1) reconstruction MSE
$\|\vx - \vxhat\|_F^2$, (2) $\ell_0$ at the $|z| > 10^{-6}$ threshold, (3) the fraction of alive latents, (4) the cross-entropy degradation
when the SAE intercepts the residual stream
$\Delta\mathrm{CE} = \mathrm{CE}_{\hat{\vx}} - \mathrm{CE}_{\vx}$
(lower is better), (5) the loss-recovered fraction
$\mathrm{LR} = (\mathrm{CE}_{\text{zero}} - \mathrm{CE}_{\hat{\vx}})/
(\mathrm{CE}_{\text{zero}} - \mathrm{CE}_{\vx})$ (higher is better). Please refer to Appendix~\ref{appx:implementation} for more implementation details.

The experiments are organised around two claims: \textbf{(C1)} the rational activation,
under the proposed initialization procedure in
Algorithm~\ref{alg:rsae}, approximately reproduces the baseline
SAE teachers' behaviour at initialisation; \textbf{(C2)} after a joint fine-tune, the RSAE improves on the teacher across the great majority of reconstruction- and downstream-behaviour metrics, uniformly across host language models and teacher activation families.

\begin{table}[hpt]
  \centering
  \caption{{\small\textbf{Main results:} teacher SAE, RSAE after initialization (RSAE init), and RSAE after fine-tuning on residual-stream activations of GPT-2 small, Pythia-160m, and Gemma-2-2B.}}
  \label{tab:main-results}
  \scriptsize
  \setlength{\tabcolsep}{3.5pt}
  \begin{tabular}{l|ccc|ccc|ccc}
  \toprule
  \multirow{2}{*}{SAEs}
    & \multicolumn{3}{c|}{GPT-2 small (layer 6)}
    & \multicolumn{3}{c|}{Pythia-160m (layer 8)}
    & \multicolumn{3}{c}{Gemma-2-2B (layer 12)} \\
  \cmidrule(lr){2-4} \cmidrule(lr){5-7} \cmidrule(lr){8-10}
    & $\|\vx{-}\vxhat\|_F^2\downarrow$ & $\ell_0\downarrow$ & alive$\uparrow$
    & $\|\vx{-}\vxhat\|_F^2\downarrow$ & $\ell_0\downarrow$ & alive$\uparrow$
    & $\|\vx{-}\vxhat\|_F^2\downarrow$ & $\ell_0\downarrow$ & alive$\uparrow$ \\
  \midrule
  ReLU SAE
    & $5.97\!\times\!10^{-1}$ & $51.4$ & $89.5\%$
    & $5.78\!\times\!10^{-2}$ & $157.7$ & $72.9\%$
    & $1.9394$ & $135.6$ & $79.0\%$ \\
  RSAE init (ReLU as teacher)
    & $5.97\!\times\!10^{-1}$ & $52.6$ & $91.0\%$
    & $5.79\!\times\!10^{-2}$ & $161.2$ & $72.9\%$
    & $1.9394$ & $135.6$ & $79.0\%$ \\
  RSAE
    & $\mathbf{5.30\!\times\!10^{-1}}$ & $53.0$ & $\mathbf{91.6\%}$
    & $\mathbf{5.24\!\times\!10^{-2}}$ & $\mathbf{149.0}$ & $\mathbf{74.5\%}$
    & $\mathbf{1.8955}$ & $\mathbf{127.4}$ & $\mathbf{79.1\%}$ \\
  \midrule
  JumpReLU SAE
    & \multicolumn{3}{c|}{no public release}
    & $2.68\!\times\!10^{-1}$ & $361.4$ & $38.9\%$
    & $3.8397$ & $212.3$ & $99.6\%$ \\
  RSAE init (JumpReLU as teacher)
    & $-$ & $-$ & $-$
    & $2.68\!\times\!10^{-1}$ & $361.2$ & $38.9\%$
    & $3.8230$ & $260.4$ & $99.7\%$ \\
  RSAE
    & $-$ & $-$ & $-$
    & $\mathbf{3.20\!\times\!10^{-2}}$ & $\mathbf{160.8}$ & $\mathbf{39.7\%}$
    & $\mathbf{1.7887}$ & $\mathbf{211.7}$ & $\mathbf{99.7\%}$ \\
  \midrule
  TopK SAE
    & $6.71\!\times\!10^{-2}$ & $32.0$ & $71.3\%$
    & $2.99\!\times\!10^{-2}$ & $160.0$ & $78.1\%$
    & $1.4278$ & $160.0$ & $99.9\%$ \\
  RSAE init (TopK as teacher)
    & $6.73\!\times\!10^{-2}$ & $33.1$ & $71.3\%$
    & $3.16\!\times\!10^{-2}$ & $152.7$ & $78.1\%$
    & $1.3731$ & $220.4$ & $99.9\%$ \\
  RSAE
    & $\mathbf{5.96\!\times\!10^{-2}}$ & $\mathbf{30.7}$ & $\mathbf{71.9\%}$
    & $\mathbf{2.73\!\times\!10^{-2}}$ & $\mathbf{151.3}$ & $\mathbf{80.8\%}$
    & $\mathbf{1.3990}$ & $\mathbf{158.9}$ & $99.9\%$ \\
  \bottomrule
  \end{tabular}\vspace{-0.1in}
  \end{table}

  \begin{table}[hpt]
    \centering
    \caption{{\small\textbf{Downstream behavior:} cross-entropy degradation and loss recovered when the SAE intercepts the residual stream of GPT-2 small, Pythia-160m, and Gemma-2-2B.}}
    \label{tab:ce-metrics}
    \small
    \setlength{\tabcolsep}{4pt}
    \begin{tabular}{l|cc|cc|cc}
    \toprule
    \multirow{2}{*}{SAEs}
      & \multicolumn{2}{c|}{GPT-2 small (layer 6)}
      & \multicolumn{2}{c|}{Pythia-160m (layer 8)}
      & \multicolumn{2}{c}{Gemma-2-2B (layer 12)} \\
    \cmidrule(lr){2-3} \cmidrule(lr){4-5} \cmidrule(lr){6-7}
      & $\Delta\mathrm{CE}\downarrow$ & $\mathrm{LR}\uparrow$
      & $\Delta\mathrm{CE}\downarrow$ & $\mathrm{LR}\uparrow$
      & $\Delta\mathrm{CE}\downarrow$ & $\mathrm{LR}\uparrow$ \\
    \midrule
    ReLU SAE
      & $0.180$ & $97.17\%$
      & $0.259$ & $95.71\%$
      & $0.283$ & $97.30\%$ \\
    RSAE
      & $\mathbf{0.123}$ & $\mathbf{98.07\%}$
      & $0.279$ & $95.41\%$
      & $\mathbf{0.273}$ & $\mathbf{97.39\%}$ \\
    \midrule
    JumpReLU SAE
      & \multicolumn{2}{c|}{no public release}
      & $0.682$ & $88.86\%$
      & $0.127$ & $98.79\%$ \\
    RSAE
      & $-$ & $-$
      & $\mathbf{0.118}$ & $\mathbf{98.10\%}$
      & $\mathbf{0.125}$ & $\mathbf{98.82\%}$ \\
    \midrule
    TopK SAE
      & $0.136$ & $97.89\%$
      & $0.232$ & $95.94\%$
      & $0.093$ & $99.11\%$ \\
    RSAE
      & $\mathbf{0.092}$ & $\mathbf{98.55\%}$
      & $\mathbf{0.097}$ & $\mathbf{98.31\%}$
      & $0.093$ & $\mathbf{99.76\%}$ \\
    \bottomrule
    \end{tabular}\vspace{-0.1in}
    \end{table}

\noindent\textbf{(C1) Approximate reproduction at initialisation.}
Table~\ref{tab:main-results} establishes \textbf{(C1)}: for every (model, teacher) pair, the \emph{RSAE init} row closely tracks the teacher row, especially on reconstruction MSE, while small $\ell_0$ and alive-feature differences remain in some cases. This is the expected outcome of the RSAE initialization procedure: the rational activation approximately reproduces the teacher's pre-activation $\vh$ to activation $\vz$ map, yielding similar SAE evaluations before fine-tuning.

\noindent\textbf{(C2) Strict improvement after fine-tuning.}
After $2$K Adam steps, the \emph{RSAE} row beats the teacher 
across the great majority of cells in Tables~\ref{tab:main-results} 
and~\ref{tab:ce-metrics}: $22/24$ reconstruction-axis cells 
in Table~\ref{tab:main-results} strictly improve over the teacher
(the only exceptions a $1.6$-token regression in 
$\ell_0$ on ReLU/GPT-2 small and an alive tie at $99.9\%$ on 
TopK/Gemma-2-2B), and $13/16$ downstream-axis cells in 
Table~\ref{tab:ce-metrics} strictly improve (the only exceptions 
a marginal regression on ReLU/Pythia-160m and a tied 
$\Delta\mathrm{CE}$ of $0.093$ on TopK/Gemma-2-2B). The wins 
hold uniformly across the three host language models, all three 
teacher activation families, and both reconstruction- and 
downstream-axis metrics, so the improvement is not an artefact 
of any single architecture, host model, or evaluation axis. 
Together, the (C1) approximate match at initialisation and the (C2) wins 
after fine-tuning are consistent with the 
{shallow-encoder rational-vs-piecewise expressivity asymmetry of}
\S\ref{sec:rational-SAE}: 
the rational activation contains every fixed-form teacher 
within a single low-degree family, and is then free to 
deviate from any of them in whatever direction lowers the 
regularised reconstruction loss on the host model's actual 
activation distribution.

    \begin{table}[hpt]
      \centering
      \caption{{\small\textbf{Wall-clock runtime} of the RSAE pipeline per model, averaged across baseline SAEs, measured on a single NVIDIA RTX 5090 (32\,GB). The Init Procedure depends only on the host language model and is therefore identical across baselines (std$\,=\,0$); for the Finetune Procedure and Total we report mean$\,\pm\,$std across baselines. The detailed per-(model, baseline) breakdown is given in Table~\ref{tab:wallclock-detail}.}}
      \label{tab:wallclock}
      \scriptsize
      \setlength{\tabcolsep}{8pt}
      \begin{tabular}{l|ccc}
      \toprule
      Model
        & Init Procedure & Finetune Procedure & Total \\
      & ($500$ steps) & ($2$K steps) & \\
      \midrule
      GPT-2 small
        & $25$\,s & $202\,\pm\,120$\,s & $227\,\pm\,120$\,s \\
      Pythia-160m
        & $20$\,s & $86\,\pm\,10$\,s & $106\,\pm\,10$\,s \\
      Gemma-2-2B
        & $60$\,s & $259\,\pm\,30$\,s & $319\,\pm\,30$\,s \\
      \bottomrule
      \end{tabular}\vspace{-0.1in}
      \end{table}

\noindent\textbf{Runtime and scalability.}
Table~\ref{tab:wallclock} reports the wall-clock cost of the RSAE training procedures and shows two properties of the method: \textbf{(i)~Cheap approximate teacher reproduction.} The Remez fit is a one-shot off-line procedure depending only on the target activation, so it can be tabulated once and reused across (model, teacher) pairs. The remaining model-specific $500$-step adaptation to the teacher pre-activation spaces completes in $25$ to $60$\,s, including Gemma-2-2B. Combined with verbatim weight inheritance, this lets RSAE approximately reproduce $\mathrm{ReLU}$, $\mathrm{JumpReLU}$, and $\mathrm{TopK}$ teachers under one framework. \textbf{(ii)~Lightweight fine-tune overhead.} Fine-tuning adds at most $(p+1)+q+2$ scalar parameters per RSAE, negligible relative to the $\mathcal{O}(d_{\mathrm{in}}d_{\mathrm{sae}})$ encoder/decoder parameters. The corresponding $2$K-step fine-tune costs $80$ to $295$\,s on the small models and $\sim$5\,min on Gemma-2-2B, so upgrading a released teacher costs minutes, not hours, on a single RTX~5090.

\noindent\textbf{Ablation studies on sparsity.}
We perform ablations along two complementary sparsity axes: the RSAE's own $\ell_1$ coefficient $\lambda$ (Figure~\ref{fig:pareto-three-arches}) and the teacher SAE's training sparsity (Table~\ref{tab:consistency-relu}). On the algorithm-side axis, sweeping $\lambda$ traces an RSAE Pareto curve that enters the strict-domination ``sweet zone'' against every teacher in Figure~\ref{fig:pareto-three-arches}, indicating that a small per-(model, teacher) tuning of $\lambda$ is sufficient to obtain better MSE and a higher alive-feature fraction at lower $\ell_0$ than the teacher. On the teacher-side axis, we re-run the pipeline against \emph{all six} SAEBench ReLU trainers on Pythia-160m, which span a wide teacher sparsity range $\ell_0 \in [51, 683]$; the RSAE wins on reconstruction, alive, and $\mathrm{LR}$ on every trainer ($6/6$) and on $\ell_0$ on half of the trainers ($3/6$). The performance gain is therefore not specific to a particular teacher sparsity but is consistent across the full range we tested.

      \begin{table}[hpt]
        \centering
        \vspace{-0.05in}
        \caption{{\small\textbf{Consistency of RSAE Pareto-domination across teacher
        sparsity.}  We run our pipeline against \emph{all six} SAEBench
        ReLU trainers on Pythia-160m, which differ only in their training
        $\ell_1$ penalty and span teacher $\ell_0 \in [51, 683]$.  ``T'' denotes the teacher SAE and ``R'' the RSAE.
        recon, alive, and $\mathrm{LR}$ are won on \emph{every} trainer ($6/6$); $\ell_0$ on
        $3/6$.}}
        \label{tab:consistency-relu}
        \scriptsize
        \setlength{\tabcolsep}{4pt}
        \begin{tabular}{r|r|cc|cc|cc|cc}
        \toprule
        trainer & teacher~$L_1$
          & recon~T & recon~R
          & $\ell_0$~T & $\ell_0$~R
          & alive~T & alive~R
          & LR~T & LR~R \\
        \midrule
        0 & $0.012$
          & $0.028$ & $\mathbf{0.024}$
          & $683$ & $\mathbf{619}$
          & $74.6$ & $\mathbf{75.0}$
          & $97.6$ & $\mathbf{98.5}$ \\
        1 & $0.015$
          & $0.035$ & $\mathbf{0.022}$
          & $494$ & $503$
          & $74.4$ & $\mathbf{75.9}$
          & $98.2$ & $\mathbf{98.6}$ \\
        2 & $0.020$
          & $0.043$ & $\mathbf{0.034}$
          & $313$ & $365$
          & $73.9$ & $\mathbf{75.4}$
          & $96.6$ & $\mathbf{97.5}$ \\
        3 & $0.030$
          & $0.058$ & $\mathbf{0.052}$
          & $158$ & $\mathbf{149}$
          & $72.9$ & $\mathbf{74.5}$
          & $95.7$ & $\mathbf{95.9}$ \\
        4 & $0.040$
          & $0.067$ & $\mathbf{0.060}$
          & $97$ & $99$
          & $71.7$ & $\mathbf{74.0}$
          & $95.6$ & $\mathbf{96.3}$ \\
        5 & $0.060$
          & $0.082$ & $\mathbf{0.075}$
          & $51$ & $\mathbf{50}$
          & $68.7$ & $\mathbf{71.8}$
          & $92.4$ & $\mathbf{92.6}$ \\
        \bottomrule
        \end{tabular}\vspace{-0.1in}
        \end{table}

        \begin{table}[t]
          \centering
          \caption{{\small\textbf{Sparse-probing interpretability} on Pythia-160m
          ReLU teacher (trainer 3) and our RSAE distilled from it,
          on the full SAEBench panel of $8$
          binary-classification tasks.  We report full-dictionary probe
          accuracy per dataset (all SAE features, no $k$-sparsity
          constraint); higher is better.  Bold marks the better SAE per row;
          \textcolor{green!50!black}{green} marks rows where the RSAE improves over the teacher.}}
          \label{tab:probing}
          \scriptsize
          \setlength{\tabcolsep}{8pt}
          \begin{tabular}{l|cc|c}
          \toprule
          Dataset
            & Teacher full $\uparrow$ & RSAE full $\uparrow$ & $\Delta$ \\
          \midrule
          \texttt{bias\_in\_bios\_set1}    & $95.28$          & $\mathbf{95.30}$ & $\textcolor{green!50!black}{+0.02}$ \\
          \texttt{bias\_in\_bios\_set2}    & $\mathbf{93.12}$ & $93.04$          & $-0.08$ \\
          \texttt{bias\_in\_bios\_set3}    & $90.94$          & $\mathbf{91.22}$ & $\textcolor{green!50!black}{+0.28}$ \\
          \texttt{amazon\_reviews}         & $87.98$          & $\mathbf{88.00}$ & $\textcolor{green!50!black}{+0.02}$ \\
          \texttt{amazon\_sentiment}       & $91.40$          & $91.40$          & $0.00$ \\
          \texttt{github-code}             & $\mathbf{96.64}$ & $96.50$          & $-0.14$ \\
          \texttt{ag\_news}                & $94.18$          & $\mathbf{94.25}$ & $\textcolor{green!50!black}{+0.07}$ \\
          \texttt{europarl}                & $99.96$          & $99.96$          & $0.00$ \\
          \midrule
          mean                             & $93.69$          & $\mathbf{93.71}$ & $\textcolor{green!50!black}{+0.02}$ \\
          \bottomrule
          \end{tabular}
          \end{table}


\subsection{Interpretability via Sparse Probing}


Table~\ref{tab:probing} reports full-dictionary linear-probe accuracy on the eight-task SAEBench panel for the Pythia-160m ReLU teacher (trainer~3) and the RSAE initialized from it.  The RSAE strictly improves on the teacher in $4/8$ tasks (\texttt{bias\_in\_bios\_set1}, \texttt{bias\_in\_bios\_set3}, \texttt{amazon\_reviews}, \texttt{ag\_news}) and ties on $2/8$ (\texttt{amazon\_sentiment}, \texttt{europarl}), losing on the remaining two by margins of at most $0.14$ percentage points (\texttt{bias\_in\_bios\_set2}: $-0.08$; \texttt{github-code}: $-0.14$).  Across the panel, the largest gain in either direction is $0.28$~percentage points (in the RSAE's favour, on \texttt{bias\_in\_bios\_set3}); all eight task-level deltas lie within a $\pm 0.3$~percentage points band that is comparable to the seed-to-seed noise of the probe.  The panel mean shifts slightly in the RSAE's favour ($93.71$ vs.~$93.69$, $\Delta = +0.02$).  Taken together, the table shows that initializing an RSAE from its teacher SAE \textbf{does \emph{not} degrade feature-level interpretability in the sparse-probing sense}: the RSAE preserves the teacher's full-dictionary probe accuracy on every task, and the substantial reconstruction-side and downstream-CE gains documented in Tables~\ref{tab:main-results} and~\ref{tab:ce-metrics} are therefore obtained without trading off the dictionary's ability to expose human-aligned, task-relevant features.

          \begin{figure}[t]
            \centering
            \includegraphics[width=\linewidth]{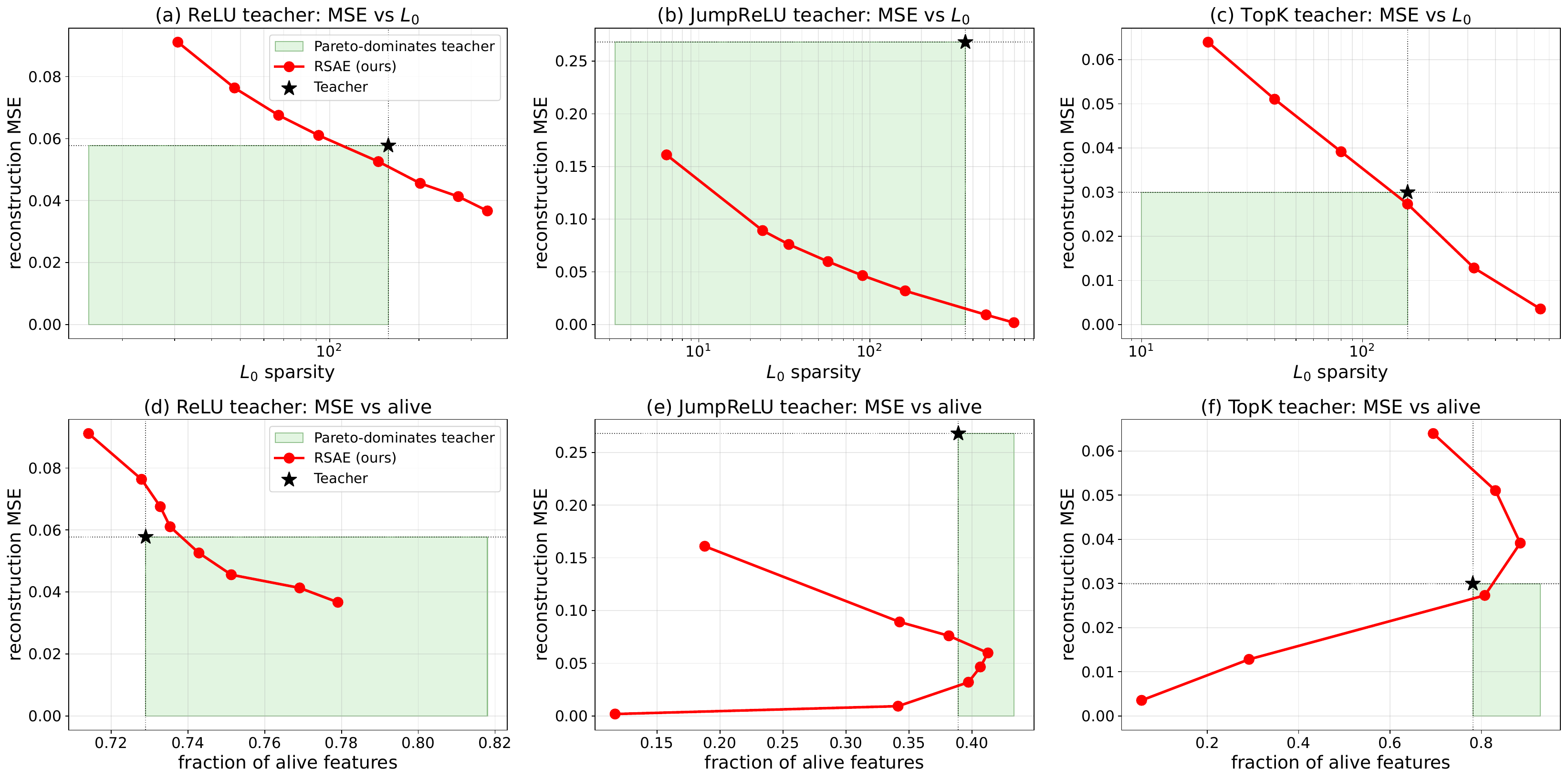}\vspace{-0.2in}
            \caption{{\small\textbf{Pareto fronts on Pythia-160m for all three
            baseline activation families.}  Subfigures (a), (b), (c) plot MSE vs.\ $\ell_0$, and subfigures (d), (e), (f) plot MSE vs.\ alive.
            Subfigures (a) and (d) use ReLU as the teacher, (b) and (e) use JumpReLU, and (c) and (f) use TopK.
            The black star is the teacher SAE; the red curve traces the RSAE Pareto front under a $\lambda$ sweep.
            The green sweet-zone marks the strict-Pareto-domination region.  The
            RSAE curve enters the sweet zone for every architecture at every
            sparsity level.}\vspace{-0.1in}}
            \label{fig:pareto-three-arches}
            \end{figure}


\section{Conclusion}


We introduced the \emph{Rational Sparse Autoencoder} (RSAE), an
SAE whose encoder activation is a learnable rational function 
{supported by approximation theory tailored to the shallow SAE
encoder: trainable rational activations compactly approximate the fixed gates
used by current SAE families, while scalar-output single-layer
piecewise-affine encoders can require many more activated coordinates for some
rational targets. The corresponding deep-network statements provide a
complementary extension beyond the SAE architecture.} 
When implemented as an upgrading strategy from existing pretrained SAEs across three host
language models and three baseline activation families (ReLU,
JumpReLU, TopK), our RSAE achieves better fidelity at comparable
sparsity and strictly improves the baseline across both
reconstruction-side metrics (MSE, $\ell_0$, alive-feature fraction)
and downstream-behaviour metrics (cross-entropy degradation, loss
recovered), and these gains hold uniformly across the full range of
baseline sparsity we tested without sacrificing feature-level
interpretability under sparse probing. 
Because the upgrade adds only
a handful of scalar parameters per autoencoder and runs in minutes
 on a single consumer GPU, any released ReLU, JumpReLU, or TopK SAE
can in principle be replaced by its RSAE counterpart at negligible
cost. 

\noindent\textbf{Limitation.} Our evaluation covers three open-weight LLMs and three activations, so the gains at other frontier models remain unexplored. 
For instance, combining the rational activation with orthogonal architectural variants such as Gated, BatchTopK, and Matryoshka SAEs is an interesting direction.

\noindent\textbf{Broader Impact.} As a drop-in upgrade to any released SAE, 
the RSAE can strengthen interpretability-based safety auditing 
or lower the cost of misuse-relevant feature steering, depending on 
how it is deployed.

\section*{Acknowledgements}
N.Y. and Y.Y. acknowledge support by the Defense Advanced Research Projects Agency (DARPA) under award HR00112590032. This research was, in part, funded by the U.S. Government by an agreement with Cornell University.

\bibliographystyle{plainnat}
\bibliography{Reference}
\appendix
\setcounter{theorem}{0}

\section{Detailed Proofs}

\begin{lemma}[Zolotarev; rational approximation of $\mathrm{sign}$]
For every $\delta\in(0,1)$ and $n\ge 1$ there is a type-$(2n+1,2n)$ rational
$s_{n,\delta}$ such that
$\sup_{x\in E_\delta}\big|\mathrm{sign}(x)-s_{n,\delta}(x)\big|
\le 4\exp\!\big(-\pi^2 n/\log(4/\delta)\big)$.

Consequently, for every $0<\varepsilon<1$, there is a rational function of
size $\mathcal{O}\big(\log(1/\varepsilon)\log(1/\delta)\big)$ that
approximates $\mathrm{sign}$ on $E_\delta$ to uniform error $\varepsilon$.
For deep-layer networks, there is a constant-width rational network of depth
$\mathcal{O}\big(\log\log(1/\varepsilon)+\log\log(1/\delta)\big)$ that
approximates $\mathrm{sign}$ on $E_\delta$ to uniform error $\varepsilon$.
\end{lemma}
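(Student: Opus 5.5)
The first claim is classical, so I would not rebuild Zolotarev's theory. I would import the sharp asymptotics for the best type-$(2n+1,2n)$ approximation of $\mathrm{sign}$ on $E_\delta$, in the explicit form used by Beckermann--Townsend and Nakatsukasa--Freund, and then check the constant.

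Zolotarev's rational $s_{n,\delta}$ is written with Jacobi elliptic functions of modulus $\kappa=\delta$. It is odd and equioscillates on $E_\delta$. Its error is
\[
  Z_n(\delta)=\frac{2\sqrt{\rho^{-2n}}}{1+\rho^{-2n}}\le 2\rho^{-n},
  \qquad \rho=\exp\!\Big(\frac{\pi K(\kappa')}{2K(\kappa)}\Big),
\]
taking the reduced form with $n$ replaced appropriately for type $(2n+1,2n)$. Here $\kappa'=\sqrt{1-\delta^2}$.

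To reach the stated rate, I would use two standard bounds on the complete elliptic integral:
\[
  K(\kappa)\ge \pi/2,
  \qquad
  K(\kappa')\ge \tfrac12\log(4/\delta)\ \text{(approximately)}.
\]
What is actually needed is the ratio $K(\kappa')/K(\kappa)\ge \pi/\log(4/\delta)$. This follows from the classical inequality $\pi K(\kappa)/K(\kappa')\le \log(16/\kappa^2)$ via the nome bound $q=e^{-\pi K'/K}\le \kappa^2/16\cdot(\dots)$, or equivalently $K'/K\ge \tfrac{2}{\pi}\log(4/\delta)\cdot(\dots)$. Together with the degree doubling (type $(2n+1,2n)$ on a symmetric set corresponds to squaring via $x\mapsto x^2$), this gives
\[
  \sup_{E_\delta}|\mathrm{sign}-s_{n,\delta}|\le 4\exp\!\big(-\pi^2 n/\log(4/\delta)\big).
\]
The main obstacle is bookkeeping the constant $4$ and the factor $\pi^2$ correctly. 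I would pin this down by citing Beckermann--Townsend (2017, Cor.~3.2), which states exactly $Z_n\le 4\rho^{-2n}$ with $\log\rho\ge \pi^2/(2\log(4/\delta))$, rather than rederiving it.

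The size corollary is immediate. I would choose
\[
  n=\Big\lceil \pi^{-2}\log(4/\delta)\log(4/\varepsilon)\Big\rceil,
\]
so the error is at most $\varepsilon$. The rational then has $O(n)=O(\log(1/\varepsilon)\log(1/\delta))$ coefficients, since $\log(4/\delta)=O(\log(1/\delta))$ for $\delta<1$. Strictly this holds for $\delta$ bounded away from $1$, and constants absorb the rest.

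For the deep statement I would use the composition property of Zolotarev functions: $Z_{mn}$ is attained by $Z_m\circ \hat Z_n$ with rescaling. More elementary, and enough for the claimed rate, is iterating the type-$(3,2)$ Newton--Schulz/Halley map $f(x)=x(3+x^2)/(1+3x^2)$, or better a fixed low-degree Zolotarev map. Each such layer is a constant-width rational layer.

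The iteration runs in two phases:
\begin{enumerate}
  \item \emph{Escape phase.} Starting from $|x|\ge\delta$, the distance $1-|x|$ shrinks by a constant factor per step. After $O(\log\log(1/\delta))$ steps, $|x|$ is bounded away from $0$ by a constant. This uses the fact that near $0$, $f(x)\approx 3x$ grows geometrically in $\log$-scale. Naively this takes $O(\log(1/\delta))$ steps. Getting $\log\log$ requires composing Zolotarev maps of type $(2m+1,2m)$ whose effective gap parameter is $\delta_{k+1}\approx 1-Z_m(\delta_k)$. This collapses doubly exponentially: Nakatsukasa--Freund show that $k$ compositions of degree-$3$ Zolotarev maps equal $Z_{3^k}$, with error $\exp(-c\,3^k/\log(4/\delta))$.
  \item \emph{Convergence phase.} Taking $3^k\gtrsim \log(1/\varepsilon)\log(1/\delta)$ gives
  \[
    k=O\big(\log\log(1/\varepsilon)+\log\log(1/\delta)\big).
  \]
\end{enumerate}
So I would invoke that composition identity and count depth. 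The main technical care is showing that intermediate ranges stay within $[-1,-\delta_k]\cup[\delta_k,1]$ after the standard rescaling by $(1+Z)^{-1}$, so each layer is again a fixed-width rational.
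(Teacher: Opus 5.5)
Your first two steps match the paper: import the Zolotarev bound from Beckermann--Townsend, then take $n=\lceil\pi^{-2}\log(4/\delta)\log(4/\varepsilon)\rceil$. Your caveat that $\log(4/\delta)=\mathcal{O}(\log(1/\delta))$ needs $\delta$ bounded away from $1$ is correct; the paper ignores it.

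For the deep-network claim you take a genuinely different route. The paper invokes a generic realizability statement: every scalar rational of degree $N$ can be implemented by a constant-width rational network of depth $\mathcal{O}(\log N)$. It then applies this to $s_{n,\delta}$. Read literally, that statement cannot hold. Such a network with fixed-type activations has only $\mathcal{O}(\log N)$ parameters, whereas rationals of type $(N,N)$ form a $(2N+1)$-parameter family.

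What actually gives the depth bound is the nesting structure of Zolotarev functions. In Nakatsukasa--Freund's normalization, $\hat Z_{mn}(x;\delta)=\hat Z_m\big(\hat Z_n(x;\delta);\hat Z_n(\delta;\delta)\big)$. Hence $k$ layer-adapted type-$(3,2)$ maps reproduce the best type-$(3^k,3^k-1)$ approximant. This is your argument, and it is also the one the paper itself uses later in its ReLU proof.

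Taking the degree to be a power of $3$, rather than the paper's arbitrary $2n+1$, is needed. If $2n+1$ is prime, the function admits no factorization into lower-degree compositions. The rounding costs only a constant factor. So your route gives an honest proof of the depth claim, while the paper's is shorter but rests on a lemma that is false in the stated generality.

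Three parts of your sketch need cleaning up.

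\begin{enumerate}
\item In $\rho=\exp(\pi K(\kappa')/(2K(\kappa)))$ with $\kappa=\delta$, the elliptic integrals are inverted. Since $K(\delta)\to\pi/2$ and $K(\sqrt{1-\delta^2})\sim\log(4/\delta)$, this gives $\rho\approx 4/\delta$. The resulting error of order $(\delta/4)^n$ would improve as the gap closes, which cannot be right.

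\item Beckermann--Townsend bound the Zolotarev number, $Z_k([-1,-\delta],[\delta,1])\le 4\exp(-\pi^2k/\log(4/\delta))$, not the sign error. The bookkeeping you flagged is settled by the M\"obius map $s=(r-1)/(r+1)$ followed by a rescaling. This gives $E_k\le 2\sqrt{Z_k}/(1+Z_k)\le 2\sqrt{Z_k}$. Apply it with $k=2n+1$; the best approximant is odd, hence of type $(2n+1,2n)$. Then $E\le 4\exp(-\pi^2(2n+1)/(2\log(4/\delta)))\le 4\exp(-\pi^2 n/\log(4/\delta))$. The paper skips this conversion as well.

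\item Iterating the fixed Halley map is \emph{not} enough for the claimed rate. Near $0$ it only multiplies by about $3$, so escaping from $\delta$ costs $\Theta(\log(1/\delta))$ layers, as you note a line later. Also, in the escape phase of the adapted iteration, the quantity that shrinks by roughly a factor $3$ per layer is $\log(1/\delta_k)$, not $1-|x|$.
\end{enumerate}
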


\begin{proof}
The first part is an immediate result from the classical Zolotarev bound in~\citep{beckermann2017singular}. Namely, for any $n\geq 1$, 
$$\max_{s_{n,\delta}}\sup_{x\in E_\delta}\big|\mathrm{sign}(x)-s_{n,\delta}(x)\big|
\le 4\exp\!\big(-\pi^2 n/\log(4/\delta)\big).$$
To guarantee
uniform error at most $\varepsilon\in(0,1)$ on
$E_\delta=[-1,-\delta]\cup[\delta,1]$, it suffices to choose $n=n_{\varepsilon,
\delta}$ so that
\[
   4\exp\!\big(-\pi^2 n_{\varepsilon,\delta}/\log(4/\delta)\big)
   \le \varepsilon.
\]
Taking logarithms gives
\[
   \frac{\pi^2 n_{\varepsilon,\delta}}{\log(4/\delta)}
   \ge \log(4/\varepsilon),
\]
or equivalently
\[
   n_{\varepsilon,\delta}
   \ge \frac{\log(4/\delta)\,\log(4/\varepsilon)}{\pi^2}.
\]
Thus one admissible choice is
\[
   n_{\varepsilon,\delta}
   := \Big\lceil
      \frac{\log(4/\delta)\,\log(4/\varepsilon)}{\pi^2}
   \Big\rceil.
\]

The rational $s_{n_{\varepsilon,\delta},\delta}$ has numerator and
denominator degree $\mathcal{O}(n_{\varepsilon,\delta})$, and hence size
\[
   \mathcal{O}(n_{\varepsilon,\delta})
   = \mathcal{O}\!\big(\log(1/\varepsilon)\log(1/\delta)\big).
\]
This gives the direct scalar rational approximation.

For the deep-layer implementation, standard realizability results for rational
networks show that a scalar rational with numerator and denominator degree at
most $N$ can be implemented by a constant-width rational network with depth
$\mathcal{O}(\log N)$; see, for instance, the construction used by
\citet{boulle2020rational}. Applying this to
$s_{n_{\varepsilon,\delta},\delta}$ gives depth
\[
   \mathcal{O}\!\big(\log n_{\varepsilon,\delta}\big)
   = \mathcal{O}\!\big(\log\log(1/\varepsilon)+\log\log(1/\delta)\big).
\]
\end{proof}

\begin{theorem}[Rational approximation of ReLU]
\label{thm:relu-appendix}
For every $0<\varepsilon<1$, there exists a scalar rational function
$R_\varepsilon:[-1,1]\to[-1,1]$ of size
\[
   \mathcal{O}\!\Big(\log^2(1/\varepsilon)\Big),
\]
such that
\[
   \sup_{x\in[-1,1]}
   \big|R_\varepsilon(x)-\mathrm{ReLU}(x)\big|
   \;\le\;\varepsilon.
\]
Consequently, the ReLU activation block can be replaced in either of two
implementations. First, $R_\varepsilon$ can be applied coordinatewise as a
trainable rational activation, with scalar size
$\mathcal{O}\!\big(\log^2(1/\varepsilon)\big)$. Second, the same scalar map can
be realized by a constant-width deep rational network of internal depth
\[
   M_R \;=\; \mathcal{O}\!\Big(\log\log(1/\varepsilon)\Big).
\]
Under either implementation, the resulting activation block
$\mathcal{R}_R:[-1,1]^{d_{\mathrm{sae}}}\to\mathbb{R}^{d_{\mathrm{sae}}}$
satisfies
\[
   \sup_{\vh\in [-1,1]^{d_{\mathrm{sae}}}}
   \big\lVert\mathcal{R}_R(\vh)-\vz_{\mathrm R}(\vh)\big\rVert_\infty
   \;\le\;\varepsilon.
\]
\end{theorem}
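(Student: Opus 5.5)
The plan is to reduce ReLU to the sign function via $\mathrm{ReLU}(x)=\tfrac{x}{2}\bigl(1+\mathrm{sign}(x)\bigr)$ and invoke Lemma~\ref{lem:zolotarev}. The only new point is to handle the neighbourhood of the kink: there is no margin there, but the prefactor $x$ suppresses the error. Concretely, I will set $R_\varepsilon(x):=\tfrac{x}{2}\bigl(1+s(x)\bigr)$, where $s=s_{n,\delta}$ is the Zolotarev approximant with $\delta:=\varepsilon$ and $n$ still to be chosen.

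On $E_\delta$ the error is controlled directly by the lemma: $|R_\varepsilon(x)-\mathrm{ReLU}(x)|=\tfrac{|x|}{2}|s(x)-\mathrm{sign}(x)|\le \tfrac12\cdot 4e^{-\pi^2 n/\log(4/\delta)}$. For $|x|<\delta$, both $\mathrm{ReLU}(x)$ and $R_\varepsilon(x)$ have size at most about $\tfrac{|x|}{2}(1+\|s\|_{\infty,[-\delta,\delta]})$. So I need a uniform bound on $s$ over the gap $(-\delta,\delta)$.

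This bound is the main obstacle, since Lemma~\ref{lem:zolotarev} only controls $s$ on $E_\delta$. I will use the standard properties of the Zolotarev sign approximant: it is odd and monotone on $[-\delta,\delta]$ (it has the form $x\cdot r(x^2)$ with $r$ positive). Hence $|s(x)|\le |s(\delta)|\le 1+4e^{-\pi^2 n/\log(4/\delta)}\le 2$ there, which gives an error of at most $\tfrac{3}{2}\delta$ on the gap. If one prefers not to rely on monotonicity, an alternative is to choose the rescaled Newman/Zolotarev construction as in \citet{boulle2020rational}, where this property is explicit. With $\delta=\varepsilon/3$ and $n=\lceil \log(4/\delta)\log(8/\varepsilon)/\pi^2\rceil=\mathcal{O}(\log^2(1/\varepsilon))$, both regimes give error at most $\varepsilon$. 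The type of $R_\varepsilon$ is $(2n+2,2n)$, so its size is $\mathcal{O}(\log^2(1/\varepsilon))$. A final rescaling, or the observation that $|R_\varepsilon|\le 1+\varepsilon$ after clipping $\varepsilon$ slightly, keeps the range in $[-1,1]$ up to harmless constants.

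For the deep implementation, I will apply the realizability fact already used in the proof of Lemma~\ref{lem:zolotarev}: a degree-$N$ scalar rational is realized by a constant-width rational network of depth $\mathcal{O}(\log N)$. The multiplication by $x/2$ and the addition of $1$ add only $\mathcal{O}(1)$ layers, since products are rational-realizable. Taking $N=\mathcal{O}(\log^2(1/\varepsilon))$ gives $M_R=\mathcal{O}(\log\log(1/\varepsilon))$. Finally, the block statement follows because $\mathcal{R}_R$ acts coordinatewise, so the $\ell_\infty$ error over $[-1,1]^{d_{\mathrm{sae}}}$ equals the scalar sup error, which is at most $\varepsilon$.
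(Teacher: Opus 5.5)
\textbf{Gap in the deep implementation.} You borrow the fact that every scalar rational of degree $N$ is realised by a constant-width rational network of depth $\mathcal{O}(\log N)$. This cannot hold in general. Such a network has only $\mathcal{O}(\log N)$ parameters, while type-$(N,N)$ rationals form a $(2N+1)$-dimensional family, so by dimension counting a generic degree-$N$ rational is not reachable. Your $R_\varepsilon$, built from $s_{n,\delta}$ with an arbitrary $n$, is therefore not shown to have depth $\mathcal{O}(\log\log(1/\varepsilon))$. That sentence does appear in the paper's proof of Lemma~\ref{lem:zolotarev}, but the paper's proof of this theorem does not rely on it. It uses the composition property of Zolotarev functions instead: the type-$(3^m,3^m-1)$ Zolotarev sign function is an $m$-fold composition of type-$(3,2)$ Zolotarev maps with updated gap parameters.

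\textbf{Repair.} Replace $n$ by $n'=(3^m-1)/2$, with $m$ minimal such that $n'\ge n$. Then:
\begin{itemize}
\item $2n'+1<3(2n+1)$, so the size stays $\mathcal{O}(\log^2(1/\varepsilon))$;
\item the error on $E_\delta$ only decreases, and your gap bound is unchanged;
\item $s_{n',\delta}$ has depth $m=\mathcal{O}(\log\log(1/\varepsilon))$, with $x$ carried on a parallel channel for the final product.
\end{itemize}

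\textbf{Scalar part.} Your scalar construction is correct and is essentially a more explicit version of the paper's. Both set $R_\varepsilon(x)=\tfrac{x}{2}(1+s(x))$ with $s$ a Zolotarev sign approximant, and let the prefactor $x$ absorb the error near the kink. The paper takes the composed $s_m$ with an optimised gap parameter and cites the root-exponential bound $\sup_{[-1,1]}\bigl||t|-t\,s_m(t)\bigr|\le 4e^{-c\,3^{m/2}}$ from \citet{boulle2020rational}, then divides by $1+\varepsilon/2$. You instead fix $\delta=\varepsilon/3$ and do the two-regime accounting yourself, which is more transparent.

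One justification must be replaced. Writing $s(x)=x\,r(x^2)$ with $r>0$ gives neither monotonicity nor boundedness on $[-\delta,\delta]$. The property does hold for the best (Zolotarev) approximant, by a root count:
\begin{itemize}
\item $s'(x)=N(x^2)/Q(x^2)^2$ with $\deg N\le 2n$;
\item the $2n$ interior alternation extrema in $(\delta,1)$ exhaust the roots of $N$;
\item the poles of $s$ are purely imaginary.
\end{itemize}
Hence $s$ is increasing on $[0,\delta]$, so $0\le s\le s(\delta)$ there. The same holds for the composed $s_{n',\delta}$ after the repair.
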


\begin{proof}
Let $\rho(t):=\mathrm{ReLU}(t)=\max\{t,0\}$. The scalar construction is the
one used in Lemma~1 of \citet{boulle2020rational}; we recall the argument to
make clear how it follows from the Zolotarev sign approximation in
Lemma~\ref{lem:zolotarev}. For an integer $m\ge 1$, take the Zolotarev sign
function $s_m$ of type $(3^m,3^m-1)$. By the composition property of
Zolotarev sign functions, $s_m$ can be written as a composition of $m$ rational
maps of type $(3,2)$, so it is represented by a constant-width rational network
with internal depth $m$.

As in the proof of Lemma~1 in \citet{boulle2020rational}, choose the gap
parameter in the Zolotarev construction optimally. Then the product
$t\,s_m(t)$ approximates $|t|$ on $[-1,1]$ with root-exponential accuracy:
there is a universal constant $c>0$ such that
\[
   \sup_{t\in[-1,1]} \big| |t|-t\,s_m(t)\big|
   \le 4\exp\!\big(-c\,3^{m/2}\big).
\]
Using the identity
\[
   \rho(t)=\frac{|t|+t}{2},
\]
define
\[
   r_m(t):=\frac{t\,s_m(t)+t}{2}.
\]
It follows that
\[
   \sup_{t\in[-1,1]} |r_m(t)-\rho(t)|
   \le 2\exp\!\big(-c\,3^{m/2}\big).
\]
Choose $m$ so that this right-hand side is at most
$\eta:=\varepsilon/2$; equivalently,
\[
   m
   = \mathcal{O}\!\big(\log\log(1/\varepsilon)\big).
\]
To keep the scalar approximant inside $[-1,1]$, set
\[
   R_\varepsilon(t):=\frac{r_m(t)}{1+\eta}.
\]
Since $0\le \rho(t)\le 1$ on $[-1,1]$ and
$|r_m(t)-\rho(t)|\le\eta$, we have
$-\eta\le r_m(t)\le 1+\eta$, hence
$R_\varepsilon([-1,1])\subset[-1,1]$. Moreover,
\[
   |R_\varepsilon(t)-\rho(t)|
   \le \frac{|r_m(t)-\rho(t)|+\eta|\rho(t)|}{1+\eta}
   \le 2\eta
   = \varepsilon.
\]

This proves the scalar approximation guarantee. For the direct trainable
rational-activation implementation, note that $s_m$ has degree
$\mathcal{O}(3^m)$, and therefore $R_\varepsilon$ has numerator and
denominator degree $\mathcal{O}(3^m)$. With the above choice of $m$,
\[
   3^m = \mathcal{O}\!\big(\log^2(1/\varepsilon)\big).
\]
Thus $R_\varepsilon$ is representable by
$\mathcal{O}(\log^2(1/\varepsilon))$ coefficients, which is the stated
size bound for the scalar rational function.

The same scalar map also admits a constant-width deep rational-network
implementation of internal depth
$m=\mathcal{O}(\log\log(1/\varepsilon))$, since $s_m$ is a composition of
$m$ type-$(3,2)$ rational maps and the final affine rescaling does not change
the asymptotic depth.

Now define the vector-valued activation block coordinatewise by
\[
   \mathcal{R}_R(\vh)_i := R_\varepsilon(h_i),
   \qquad i=1,\ldots,d_{\mathrm{sae}}.
\]
Since $\vz_{\mathrm R}(\vh)_i=\rho(h_i)$, the scalar uniform bound gives
\[
   \big\lVert\mathcal{R}_R(\vh)-\vz_{\mathrm R}(\vh)\big\rVert_\infty
   = \max_{1\le i\le d_{\mathrm{sae}}}|R_\varepsilon(h_i)-\rho(h_i)|
   \le \varepsilon
\]
for every $\vh\in[-1,1]^{d_{\mathrm{sae}}}$.
\end{proof}

\begin{theorem}[Rational approximation of JumpReLU]
Fix positive thresholds $\vtheta\in(\mathbb{R}^{+})^{d_{\mathrm{sae}}}$ and a
margin $\delta\in(0,1)$, and define
\[
   \Omega_\delta
   := \{\vh\in[-1,1]^{d_{\mathrm{sae}}}: |h_i-\theta_i|\ge\delta
      \text{ for every } i\}.
\]
For every $0<\varepsilon<1$, there is a rational activation block
$\mathcal{R}_J:[-1,1]^{d_{\mathrm{sae}}}\to\mathbb{R}^{d_{\mathrm{sae}}}$ whose
scalar coordinate maps admit either of two implementations. First, each
coordinate map can be implemented directly as a trainable scalar rational
activation of size
\[
   \mathcal{O}\!\Big(\log(1/\varepsilon)\log(1/\delta)\Big),
\]
with constants depending only on the fixed threshold scale. Second, each
coordinate map can be realized by a constant-width deep rational network of
internal depth
\[
   M_J \;=\; \mathcal{O}\!\Big(\log\log(1/\varepsilon)+\log\log(1/\delta)\Big)
\]
and per-coordinate size $\mathcal{O}(M_J)$. Under either implementation,
$\mathcal{R}_J$ satisfies
\[
   \sup_{\vh\in \Omega_\delta}
   \big\lVert\mathcal{R}_J(\vh)-\vz_{\mathrm J}(\vh)\big\rVert_\infty
   \;\le\;\varepsilon.
\]
\end{theorem}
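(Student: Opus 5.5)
The plan is to reduce the JumpReLU coordinate map to the sign approximation of Lemma~\ref{lem:zolotarev}, exactly as the ReLU case was reduced to $t\,s_m(t)$. Fix a coordinate $i$ and write $\theta=\theta_i\in(0,1)$; if $\theta\ge 1$, the map on the admissible domain is identically $0$ and there is nothing to prove. The scalar target is $g(t)=t\,\frac{\mathrm{sign}(t-\theta)+1}{2}$ for $t\in[-1,1]$ with $|t-\theta|\ge\delta$. The shift $t\mapsto t-\theta$ sends this set into $[-1-\theta,-\delta]\cup[\delta,1-\theta]$. Rescaling by $u=(t-\theta)/2$ places it inside $E_{\delta/2}=[-1,-\delta/2]\cup[\delta/2,1]$, since $|t-\theta|\le 1+\theta<2$. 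The constant $2$ (in general $1+\max_i\theta_i$) is the ``fixed threshold scale'' on which the constants depend.

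\textbf{Step 1 (scalar construction).} Take the Zolotarev approximant $s:=s_{n,\delta/2}$ from Lemma~\ref{lem:zolotarev}, with accuracy $\eta$ on $E_{\delta/2}$. Define
\[
  J(t):=t\,\frac{s\big((t-\theta)/2\big)+1}{2}.
\]
Then on the admissible set, $|J(t)-g(t)|=\tfrac{|t|}{2}\,\big|s((t-\theta)/2)-\mathrm{sign}(t-\theta)\big|\le \eta/2$, because $|t|\le 1$. Choosing $\eta=\varepsilon$ (or $2\varepsilon$) gives uniform error at most $\varepsilon$. By Lemma~\ref{lem:zolotarev}, $n=\mathcal{O}(\log(1/\varepsilon)\log(2/\delta))$. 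Composing with an affine map does not increase the degrees, and multiplying by $(t+\cdot)/2$ adds one to the numerator degree. So $J$ is a rational function of type $(2n+2,2n)$, of size $\mathcal{O}(\log(1/\varepsilon)\log(1/\delta))$, with $\log(2/\delta)\lesssim\log(1/\delta)$ absorbed into the constant for $\delta<1$. This gives the direct trainable-activation implementation. If desired, a final normalisation $J/(1+\eta)$ as in Theorem~\ref{thm:relu} keeps the range in $[-1,1]$ at the cost of another factor of two in $\varepsilon$.

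\textbf{Step 2 (deep realization).} Invoke the deep part of Lemma~\ref{lem:zolotarev}: there is a constant-width rational network $\mathcal{N}$ of depth $\mathcal{O}(\log\log(1/\varepsilon)+\log\log(1/\delta))$ approximating $\mathrm{sign}$ on $E_{\delta/2}$ to accuracy $\varepsilon$. Prepend the affine map $t\mapsto(t-\theta)/2$. Then carry $t$ alongside through an identity channel, since constant width allows one extra neuron and the identity is a rational map of type $(1,0)$. Finally, form the product $t\cdot(\mathcal{N}+1)/2$. The product is a single type-$(2,0)$ operation; if only univariate rational activations are allowed, use the polarization identity $ab=\tfrac14((a+b)^2-(a-b)^2)$, which again costs constant width and depth. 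The total depth is $M_J$, and each layer has constant width, so the per-coordinate size is $\mathcal{O}(M_J)$.

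\textbf{Step 3 (vector block).} Define $\mathcal{R}_J(\vh)_i:=J_i(h_i)$ with the threshold $\theta_i$. Since $\vh\in\Omega_\delta$ means each $h_i$ lies in the admissible scalar set, the coordinatewise bound gives the $\|\cdot\|_\infty$ bound, as in the last step of Theorem~\ref{thm:relu}.

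The main obstacle I expect is bookkeeping rather than substance. First, the shifted domain must fit inside a set of the form $E_{\delta'}$, so both the rescaling constant and its effect on $\delta$ have to be tracked. Second, the multiplication by $t$ in the deep implementation must be justified as realizable within a constant-width rational network. I would handle the first issue by the explicit factor $1/(1+\max_i\theta_i)$ and the second by the identity channel plus polarization.
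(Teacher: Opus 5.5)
Your proposal is correct and follows essentially the same route as the paper: shift and rescale $h_i-\theta_i$ into a gap-separated set, apply the Zolotarev approximant of Lemma~\ref{lem:zolotarev}, and form $h_i\,(1+s(\cdot))/2$ with error at most $\varepsilon/2$ and numerator degree raised by one. In the deep version both arguments realize the multiplication by $h_i$ with the polarization identity, and the differences are cosmetic: you dispose of $\theta_i\ge 1$ separately and rescale by $2$ rather than $C_\theta=1+\|\vtheta\|_\infty$, you make the identity channel carrying $h_i$ explicit, and the paper additionally notes that the gate coefficients can be shared across coordinates.
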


\begin{proof}
For each coordinate,
\[
   z_{\mathrm J,i}(\vh)
   = h_i\,H(h_i-\theta_i)
   = h_i\cdot \frac{1+\mathrm{sign}(h_i-\theta_i)}{2}.
\]
Thus the problem reduces to approximating the scalar gate
$H(h_i-\theta_i)$ on the margin-separated set
$|h_i-\theta_i|\ge \delta$ and then multiplying by $h_i$.

Let
\[
   C_\theta := 1 + \|\vtheta\|_\infty.
\]
Since $|h_i|\le 1$, we have $|h_i-\theta_i|\le C_\theta$ for every coordinate.
Apply Lemma~\ref{lem:zolotarev} to the rescaled variable
$u=(h_i-\theta_i)/C_\theta$, which ranges over
$[-1,-\delta/C_\theta]\cup[\delta/C_\theta,1]$ on $\Omega_\delta$. This gives
a scalar rational function $s$ such that
\[
   \sup_{|t|\in[\delta,C_\theta]}
   \Big|\mathrm{sign}(t)-s\Big(\frac{t}{C_\theta}\Big)\Big|
   \le \varepsilon.
\]
Define
\[
   \widetilde H(t) := \frac{1+s(t/C_\theta)}{2},
\]
then for all $|t|\ge \delta$,
\[
   |\widetilde H(t)-H(t)|
   = \frac12\Big|s\Big(\frac{t}{C_\theta}\Big)-\mathrm{sign}(t)\Big|
   \le \frac{\varepsilon}{2}.
\]
By Lemma~\ref{lem:zolotarev}, this scalar gate is implemented by a
constant-width rational network of depth
\[
   \mathcal{O}\!\big(\log\log(1/\varepsilon)+\log\log(C_\theta/\delta)\big)
   = \mathcal{O}\!\big(\log\log(1/\varepsilon)+\log\log(1/\delta)\big),
\]
where the second equality uses that $C_\theta$ is an architectural constant.

For the direct trainable rational-activation implementation, keep the same
scalar rational gate instead of factorizing it into a deep constant-width
composition. If
$n_{\varepsilon,\delta}$ denotes the degree selected in
Lemma~\ref{lem:zolotarev} with margin $\delta/C_\theta$, then
\[
   n_{\varepsilon,\delta}
   = \mathcal{O}\!\Big(\log(C_\theta/\delta)\log(1/\varepsilon)\Big)
   = \mathcal{O}\!\Big(\log(1/\delta)\log(1/\varepsilon)\Big),
\]
again treating $C_\theta$ as fixed. The rational gate
$t\mapsto \widetilde H(t)$ therefore has numerator and denominator degree
$\mathcal{O}(n_{\varepsilon,\delta})$.

We now further define the rational approximation
\[
   \widetilde z_i(\vh) := h_i\,\widetilde H(h_i-\theta_i).
\]
The product is realised exactly by the standard multiplication identity
\[
   xy = \frac14\big[(x+y)^2-(x-y)^2\big],
\]
which is polynomial of degree $2$ and therefore belongs to the rational class
with constant additional depth. On $\Omega_\delta$ we have
\begin{align*}
   |\widetilde z_i(\vh)-z_{\mathrm J,i}(\vh)|
   &= |h_i|\,|\widetilde H(h_i-\theta_i)-H(h_i-\theta_i)| \leq \frac{\varepsilon}{2}
   \le \varepsilon.
\end{align*}
So each coordinate is approximated uniformly to error at most $\varepsilon$.

Moreover, the scalar map $h_i\mapsto\widetilde z_i(\vh)$ is itself a
univariate rational function of degree
$\mathcal{O}(n_{\varepsilon,\delta})$: multiplying by $h_i$ only increases the
numerator degree by one. 

For the direct scalar rational implementation, the numerator and denominator
degrees are therefore $\mathcal{O}(n_{\varepsilon,\delta})$, so each coordinate
map has size $\mathcal{O}\!\big(\log(1/\varepsilon)\log(1/\delta)\big)$.
Define $\mathcal{R}_J(\vh)_i:=\widetilde z_i(\vh)$ and apply the maps
coordinatewise. The rational coefficients in the gate $\widetilde H$ are
shared across coordinates; the coordinate dependence enters only through the
affine shift $h_i\mapsto h_i-\theta_i$ and the final multiplication by $h_i$.
Equivalently, since $\theta_i>0$, one may use a shared prototype gate
$r_{1/2}$ at threshold $1/2$ and apply it as
$r_{1/2}(h_i/(2\theta_i))$, because
\[
   \frac{h_i}{2\theta_i}-\frac12 = \frac{h_i-\theta_i}{2\theta_i}.
\]
The fixed threshold scale only changes constants in the margin, so the same
asymptotic size bound applies. The scalar error bound gives
\[
   \sup_{\vh\in\Omega_\delta}
   \|\mathcal{R}_J(\vh)-\vz_{\mathrm J}(\vh)\|_\infty
   \le \varepsilon,
\]
which is the direct scalar-rational realization claimed in the theorem.

For the deep implementation, realize the same shared gate, either
$\widetilde H$ or the equivalent prototype $r_{1/2}$, by the constant-width
rational network above and use $d_{\mathrm{sae}}$ copies in parallel. The
coordinate-dependent affine shifts or scalings and the final multiplication by
$h_i$ add only constant depth, so the vector-valued realization has width
proportional to the number of coordinates and depth
$\mathcal{O}(\log\log(1/\varepsilon)+\log\log(1/\delta))$.
\end{proof}


\begin{theorem}[Rational approximation of supplied-threshold TopK gate]
Fix {$1\le k<d_{\mathrm{sae}}$ and} a margin $\delta\in(0,1)$ and define
\[
{
   \Omega^{\mathrm T}_\delta
   := \{(\vh,\tau_k)\in[-1,1]^{d_{\mathrm{sae}}}\times[-1,1]:
      h_{(k)}-\tau_k\ge\delta,\;\tau_k-h_{(k+1)}\ge\delta\},
}
\]
{where $h_{(1)}\ge\cdots\ge h_{(d_{\mathrm{sae}})}$ are the sorted pre-activations and $\tau_k$ is a supplied separator between the $k$-th and $(k+1)$-st order statistics, not the $k$-th activation itself.}
Suppose the scalar threshold $\tau_k\in[-1,1]$ is supplied together with each
pre-activation vector. For every $0<\varepsilon<1$, there is a rational network
$\mathcal{R}_T:[-1,1]^{d_{\mathrm{sae}}+1}\to\mathbb{R}^{d_{\mathrm{sae}}}$
whose scalar coordinate maps admit either of two implementations. First, each
coordinate map can be implemented directly as a trainable scalar rational
activation of size
\[
   \mathcal{O}\!\Big(\log(1/\varepsilon)\log(1/\delta)\Big).
\]
Second, each coordinate map can be realized by a constant-width deep rational
network of internal depth
\[
   M_T \;=\; \mathcal{O}\!\Big(\log\log(1/\varepsilon)+\log\log(1/\delta)\Big).
\]
Under either implementation, $\mathcal{R}_T$ satisfies
\[
   \sup_{(\vh,\tau_k)\in\Omega^{\mathrm{T}}_\delta}
   \big\lVert \mathcal{R}_T(\vh,\tau_k)-\vz_{\mathrm{T}}(\vh;\tau_k)\big\rVert_\infty
   \;\le\;\varepsilon.
\]
\end{theorem}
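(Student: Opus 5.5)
The plan is to reduce the supplied-threshold TopK gate to the JumpReLU argument, with the threshold now an input variable rather than a fixed constant. For each coordinate,
\[
   z_{\mathrm T,i}(\vh;\tau_k) = h_i\cdot\frac{1+\mathrm{sign}(h_i-\tau_k)}{2}.
\]
The first step is to check that the margin condition defining $\Omega^{\mathrm T}_\delta$ gives a per-coordinate margin. Every coordinate either satisfies $h_i\ge h_{(k)}\ge \tau_k+\delta$ or $h_i\le h_{(k+1)}\le\tau_k-\delta$. Hence $|h_i-\tau_k|\ge\delta$ for all $i$; ties are harmless, since they only make the order statistics coincide on one side of $\tau_k$. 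Moreover, $|h_i-\tau_k|\le 2$ because $h_i,\tau_k\in[-1,1]$.

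I will therefore work with the rescaled difference $u_i:=(h_i-\tau_k)/2$. It lies in $E_{\delta/2}=[-1,-\delta/2]\cup[\delta/2,1]$, and it is an affine function of the inputs $(\vh,\tau_k)$. Lemma~\ref{lem:zolotarev} with margin $\delta/2$ then supplies a rational $s$ with $\sup_{E_{\delta/2}}|s-\mathrm{sign}|\le\varepsilon$. Its degree is $n=\mathcal{O}(\log(1/\varepsilon)\log(2/\delta))=\mathcal{O}(\log(1/\varepsilon)\log(1/\delta))$.

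Next I set $\widetilde H(t):=(1+s(t/2))/2$, so that $|\widetilde H(h_i-\tau_k)-H(h_i-\tau_k)|\le\varepsilon/2$ on the domain, and define
\[
   \mathcal{R}_T(\vh,\tau_k)_i := h_i\,\widetilde H(h_i-\tau_k).
\]
Since $|h_i|\le1$, this gives $|\mathcal{R}_T(\vh,\tau_k)_i-z_{\mathrm T,i}|\le\varepsilon/2\le\varepsilon$, and taking the maximum over $i$ yields the $\|\cdot\|_\infty$ bound on $\Omega^{\mathrm T}_\delta$. For the direct implementation, I will write $h_i=(h_i-\tau_k)+\tau_k$. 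For fixed supplied $\tau_k$, the coordinate map is then a univariate rational function of $t=h_i-\tau_k$ (or of $h_i$ itself) of numerator degree $n+1$ and denominator degree $n$. The coefficients of $s$ are shared, and only an affine pre-shift by $\tau_k$ enters. This gives size $\mathcal{O}(\log(1/\varepsilon)\log(1/\delta))$.

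For the deep implementation, I will invoke the depth part of Lemma~\ref{lem:zolotarev}. The gate $s$ is realized by a constant-width rational network of depth $\mathcal{O}(\log\log(1/\varepsilon)+\log\log(1/\delta))$, fed the affine input $(h_i-\tau_k)/2$. The final product $h_i\cdot\widetilde H$ is then realized exactly via $xy=\tfrac14[(x+y)^2-(x-y)^2]$, which adds constant depth, with $h_i$ carried forward through an identity channel. That channel is available in rational networks: either use an identity-type rational unit, or pass $h_i$ alongside the gate through the layers.

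The main obstacle is the point that differs from JumpReLU. There, the threshold was a fixed positive constant that could be absorbed into a rescaling. Here $\tau_k$ is a variable input, so the gate depends on two variables. The plan is to handle this entirely through the affine pre-activation $h_i-\tau_k$, which is linear in the network input, so the scalar rational activation stays univariate. The remaining thing to verify carefully is that the margin transfers from the order-statistic condition to every coordinate uniformly, and that the rescaling constant $2$ is uniform in $\tau_k$; both hold by the argument above.
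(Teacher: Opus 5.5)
Your proposal is correct and follows essentially the same route as the paper. You reduce to the JumpReLU argument through the affine difference $t_i=h_i-\tau_k$, which satisfies $|t_i|\in[\delta,2]$ on $\Omega^{\mathrm T}_\delta$. You then apply the Zolotarev lemma to $t_i/2$ (margin $\delta/2$), set $\widetilde H(t)=(1+s(t/2))/2$, and multiply by $h_i$ to get the $\varepsilon/2$ bound. The direct size comes from the degree of $s$ plus one, and the depth from the lemma's deep realization plus constant depth for the shift and product.

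Your extra remarks only make explicit what the paper's proof leaves implicit. These are the per-coordinate margin derived from the order statistics, the identity channel for $h_i$, and the observation that the direct map is univariate only once $\tau_k$ is fixed.
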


\begin{proof}
Set $t_i:=h_i-\tau_k$. For each coordinate,
\[
   z_{\mathrm T,i}(\vh;\tau_k)
   = h_i\,H(h_i-\tau_k)
   = h_i\cdot \frac{1+\mathrm{sign}(h_i-\tau_k)}{2}.
\]
{On $\Omega^{\mathrm T}_\delta$, the top-$k$ coordinates satisfy $h_i-\tau_k\ge\delta$ and the remaining coordinates satisfy $h_i-\tau_k\le-\delta$. Therefore $|t_i|\in[\delta,2]$ for every coordinate, because $h_i,\tau_k\in[-1,1]$.} Hence the JumpReLU proof
applies with the learned threshold $\theta_i$ replaced by the supplied
threshold $\tau_k$ and with the fixed scale $2$. Applying
Lemma~\ref{lem:zolotarev} to $u=t_i/2$ gives a shared scalar rational function
$s$ such that
\[
   \sup_{|t|\in[\delta,2]}
   \Big|\mathrm{sign}(t)-s\Big(\frac{t}{2}\Big)\Big|
   \le \varepsilon.
\]
Define
\[
   G(u):=\frac{1+s(u)}{2},
   \qquad
   \widetilde H(t):=\frac{1+s(t/2)}{2},
   \qquad
   \widetilde z_i(\vh;\tau_k):=h_i\,\widetilde H(h_i-\tau_k).
\]
Then, for every $(\vh,\tau_k)\in\Omega^{\mathrm T}_\delta$,
\[
   |\widetilde z_i(\vh;\tau_k)-z_{\mathrm T,i}(\vh;\tau_k)|
   \le |h_i|\,|\widetilde H(h_i-\tau_k)-H(h_i-\tau_k)|
   \le \frac{\varepsilon}{2}
   \le \varepsilon.
\]

For the direct trainable rational-activation implementation, keep the shared
gate $G$ unfactored and apply it to the affine scalar $(h_i-\tau_k)/2$.
By Lemma~\ref{lem:zolotarev}, the numerator and denominator degrees are
$\mathcal{O}(\log(1/\varepsilon)\log(1/\delta))$; multiplying by $h_i$
increases only the numerator degree by one. Thus each coordinate map has size
$\mathcal{O}(\log(1/\varepsilon)\log(1/\delta))$, with the rational
coefficients shared across coordinates.

For the deep implementation, realize the same shared gate $G$ by a
constant-width rational network of internal depth
$\mathcal{O}(\log\log(1/\varepsilon)+\log\log(1/\delta))$. The affine difference
and fixed scaling $(h_i-\tau_k)/2$, together with the final multiplication by
$h_i$, add only constant depth. Applying this construction coordinatewise gives
the stated block $\mathcal{R}_T$ and the uniform $\ell_\infty$ error bound.
\end{proof}

\begin{theorem}[Lower bound for ReLU/JumpReLU/TopK networks]
There exists a scalar rational target map
$\mathcal{R}^\star_\eta:[-1,1]\to[0,1]$, realizable with
$\mathcal{O}(1)$ rational parameters, such that the following hold.

First, any scalar map $\mathcal{S}:[-1,1]\to[0,1]$ realized by a
scalar-output ReLU/JumpReLU/supplied-threshold TopK network and satisfying
\[
   \lVert\mathcal{S}-\mathcal{R}^\star_\eta\rVert_{L^\infty([-1,1])}\le\varepsilon
\]
must satisfy $P=\Omega(\log(1/\varepsilon))$, where $P$ is the number of
trainable parameters. 

Second, any scalar map
$\mathcal{S}:[-1,1]\to[0,1]$ realized by the scalar-output version of the
single-layer encoder architecture in~\eqref{eq:sae-skel}, using ReLU,
JumpReLU, or a supplied-threshold TopK gate with $N$ activated coordinates, and
satisfying
\[
   \lVert\mathcal{S}-\mathcal{R}^\star_\eta\rVert_{L^\infty([-1,1])}\le\varepsilon
\]
must satisfy $N=\Omega(\varepsilon^{-1/2})$.
\end{theorem}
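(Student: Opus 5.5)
The plan is to take $\mathcal{R}^\star_\eta(x)=\eta^2/(x^2+\eta^2)$ with $\eta\in(0,1/2)$ fixed. Realizability with $\mathcal{O}(1)$ rational parameters is immediate, since it is a type-$(0,2)$ rational with coefficients $\eta^2,0,1$. Both lower bounds will rest on one structural fact. Any scalar map $\mathcal{S}$ computed by a ReLU/JumpReLU/supplied-threshold TopK network is piecewise affine on $[-1,1]$. For ReLU this is classical. For JumpReLU and the supplied-threshold gate, each coordinate is $h\mapsto h\,H(h-c)$, which is affine on each side of a breakpoint. This holds because the threshold $\tau_k$ is treated as a supplied constant, so no order statistics are computed. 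Compositions and affine combinations of such maps are again piecewise affine, possibly discontinuous. The whole argument then reduces to counting affine pieces against the curvature of $\mathcal{R}^\star_\eta$.

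The key analytic step is the classical piece-length estimate. Suppose an affine function $\ell$ approximates a $C^2$ function $f$ to within $\varepsilon$ on an interval $I$, and $|f''|\ge\kappa>0$ on $I$. Then $|I|\le C\sqrt{\varepsilon/\kappa}$. To prove it, evaluate $f-\ell$ at the two endpoints and the midpoint of $I$. The second difference is at least $\kappa|I|^2/4$ in absolute value, while the sup bound forces it to be at most $4\varepsilon$. Next I would fix a subinterval $J\subset[-1,1]$ of length bounded below, depending only on $\eta$, on which $|(\mathcal{R}^\star_\eta)''|\ge\kappa_\eta>0$. The second derivative is $2\eta^2(3x^2-\eta^2)/(x^2+\eta^2)^3$, so I would take $J=[\eta,1]$ or a neighbourhood of $0$. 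Since $\eta$ is fixed, covering $J$ requires at least $c_\eta\,\varepsilon^{-1/2}$ affine pieces of $\mathcal{S}$.

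For the single-layer claim, a scalar-output encoder has the form $\mathcal{S}(x)=\sum_j w_j\phi_j(a_jx+b_j)+c$. Only the $N$ activated coordinates contribute a breakpoint; any other coordinate is affine or zero on the domain. Each gate contributes at most one breakpoint, so $\mathcal{S}$ has at most $N+1$ affine pieces, which gives $N=\Omega(\varepsilon^{-1/2})$. For the general network claim, I would invoke the standard count, in the style of Telgarsky and Yarotsky, that a piecewise-affine network with $P$ parameters and depth $L$ has at most $2^{\mathcal{O}(P)}$ pieces; the bound is in fact polynomial in width and exponential in depth, and both are bounded by $P$. Combining this with the $\varepsilon^{-1/2}$ piece requirement yields $P=\Omega(\log(1/\varepsilon))$.

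I expect the main obstacle to be making the piece count rigorous for JumpReLU and thresholded gates, since these introduce discontinuities. I would first verify that the piece-length lemma never uses continuity of $\mathcal{S}$, only affineness on each piece, which is true. I would then check that the breakpoint-counting recursion through layers still holds when gates are discontinuous: each neuron splits each current piece into at most two pieces, exactly as for ReLU. The remaining care is bookkeeping. Constants depend on $\eta$, and only $\varepsilon\to0$ is asymptotic. The range constraint $\mathcal{S}\in[0,1]$ plays no real role.
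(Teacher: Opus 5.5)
Your proposal is correct and follows essentially the same route as the paper. The shared ingredients are the target $\eta^2/(x^2+\eta^2)$, the endpoint/midpoint (secant) curvature bound on a fixed interval where $|(\mathcal{R}^\star_\eta)''|$ is bounded below (the paper uses the concave region $[-\eta/2,\eta/2]$), the at-most-$(N{+}1)$-pieces count for the single-layer encoder, and a Telgarsky-type breakpoint count for the deep case. One small tightening: bounding width and depth each by $P$ only gives $P^{\mathcal{O}(P)}$ pieces, so justify $2^{\mathcal{O}(P)}$ with your neuron-by-neuron doubling argument instead (at most $2^{M}\le 2^{P}$ pieces), which also covers the discontinuous gates directly.
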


\begin{proof}
Consider a rational function:
\[
   \mathcal{R}^\star_\eta(x):=\frac{\eta^2}{x^2+\eta^2},
   \qquad x\in[-1,1],
\]
with $\eta\in(0,1/2)$ fixed and the interval $I_\eta=[-\eta/2,\eta/2]$. A direct calculation gives
\[
   (\mathcal{R}^\star_\eta)''(x)
   = \frac{2\eta^2\big(3x^2-\eta^2\big)}{(x^2+\eta^2)^3}.
\]
For $|x|\le \eta/2$, we have $3x^2-\eta^2\le -\eta^2/4$ and
$x^2+\eta^2\le 5\eta^2/4$, hence
\[
   (\mathcal{R}^\star_\eta)''(x)
   \le -\frac{\eta^4/2}{(5\eta^2/4)^3}
   = -\frac{32}{125\eta^2}
   =: -c_\eta.
\]
Thus $\mathcal{R}^\star_\eta$ is uniformly concave on $I_\eta$.

Let $g:[-1,1]\to[0,1]$ be piecewise affine with $R$ affine pieces and assume
$\|g-\mathcal{R}^\star_\eta\|_{L^\infty([-1,1])}\le\varepsilon$. Since
$|I_\eta|=\eta$, one affine piece of $g$ contains a subinterval
$J=[a,b]\subset I_\eta$ of length
\[
   \ell:=|J|\ge \frac{\eta}{R}.
\]
Let $s_J$ be the secant line of $\mathcal{R}^\star_\eta$ on $J$, and let
$m=(a+b)/2$ be the midpoint of $J$. Because
$(\mathcal{R}^\star_\eta)''\le -c_\eta$ on $J$, Taylor's theorem at the
midpoint gives
\[
   \mathcal{R}^\star_\eta(m)-s_J(m) \ge \frac{c_\eta\ell^2}{8}.
\]
Now write the affine restriction of $g$ on $J$ as
\[
   g|_J = s_J + q,
\]
where $q$ is affine. Since $q$ is affine,
\[
   q(m)=\frac{q(a)+q(b)}{2}.
\]
Also, because $s_J(a)=\mathcal{R}^\star_\eta(a)$ and
$s_J(b)=\mathcal{R}^\star_\eta(b)$, the uniform error bound at the endpoints
implies
\[
   |q(a)|\le \varepsilon,
   \qquad
   |q(b)|\le \varepsilon,
\]
and therefore $|q(m)|\le \varepsilon$. Evaluating the error at the midpoint,
we obtain
\[
   \varepsilon
   \ge \big|\mathcal{R}^\star_\eta(m)-g(m)\big|
   = \big|\mathcal{R}^\star_\eta(m)-s_J(m)-q(m)\big|
   \ge \frac{c_\eta\ell^2}{8}-\varepsilon.
\]
Hence
\[
   \varepsilon \ge \frac{c_\eta\ell^2}{16}
   \ge \frac{c_\eta\eta^2}{16R^2},
\]
which yields
\[
   R \ge c'_\eta\,\varepsilon^{-1/2}
\]
for a constant $c'_\eta>0$ depending only on $\eta$.

This already gives the single-layer encoder lower bound. Indeed, a
scalar-output version of the SAE encoder in~\eqref{eq:sae-skel} has the form
\[
   g(x)=a_0+\sum_{j=1}^{N} a_j\,\psi_j(w_jx+b_j),
\]
where each $\psi_j$ is a ReLU, JumpReLU, or supplied-threshold TopK gate. Each
supplied-threshold TopK summand is interpreted with its supplied threshold
fixed along this scalar restriction, so every summand is affine except at at
most one breakpoint. Hence the union of all breakpoints has size at most $N$
and the scalar map has at most $N+1$ affine
pieces. Combining this with the piece-count lower bound above gives
\[
   c'_\eta\,\varepsilon^{-1/2} \le R \le N+1,
\]
and hence $N=\Omega(\varepsilon^{-1/2})$.

For the arbitrary-depth statement, if $\mathcal{S}$ is a scalar-valued ReLU
network with depth $L$ and $M$ hidden units, then the network has at most
$(M/L)^L$ breakpoints, and the same conclusion holds for JumpReLU and
supplied-threshold TopK networks (see, e.g.,
\citet{telgarsky2016benefits}).
Therefore, in either case,
\[
   c'_\eta\,\varepsilon^{-1/2}\le R \le (M/L)^L.
\]
Taking logarithms of both sides of $c'_\eta\,\varepsilon^{-1/2}\le(M/L)^L$ gives
\[
   \log(c'_\eta)+\tfrac{1}{2}\log(1/\varepsilon)
   \;\le\; L\log(M/L).
\]
Since each layer has at least $2$ neurons we have $M\ge 2L$, so $M/L\ge 2>1$.
Using $\log x < x$ for all $x>1$ yields $\log(M/L)<M/L$, and therefore
\[
   L\log(M/L) \;<\; L\cdot\frac{M}{L} \;=\; M.
\]
Combining the two inequalities,
\[
   M \;>\; \log(c'_\eta)+\tfrac{1}{2}\log(1/\varepsilon),
\]
so $M=\Omega(\log(1/\varepsilon))$.
Since each hidden unit contributes at least one bias parameter, the total
trainable parameter count satisfies $P\ge M$, and therefore
$P=\Omega(\log(1/\varepsilon))$.
\end{proof}

\section{Detailed Empirical Results}
\label{appx:detailed-results}

\subsection{Implementation Details and Experimental Setup}
\label{appx:implementation}

\begin{algorithm}[t]
       \caption{Rational Sparse Autoencoder (RSAE) construction.}
       \label{alg:rsae}
       \begin{algorithmic}[1]
       \Require Pre-trained baseline SAE
         $\{\widetilde{\mWenc}, \widetilde{\vbenc},
            \widetilde{\mWdec}, \widetilde{\vbdec}\}$
         with activation $\phi^{\text{teacher}} \in
         \{\mathrm{ReLU},\, \mathrm{JumpReLU}_{\theta},\, \mathrm{TopK}_k\}$;
         rational type $(p, q)$;
         calibration buffer $\{\vx_n\}_{n=1}^{N_{\text{cal}}}$;
         dense grid size $N$;
         sparsity coefficient $\lambda$.
       \Statex \textbf{Step 1 --- RSAE Initialization Procedure.}
         \State Build dense grid $\{(t_\ell, y_\ell)\}_{\ell=1}^{N} \subset [-1, 1]$
                with $y_\ell = \phi^{\text{teacher}}(t_\ell)$.
         \State Fit rational coefficients $(\ab^\ast, \bb^\ast)$ to the teacher
                activation by the relaxed Remez exchange targeting
                \eqref{eq:fit-general} (Appendix~\ref{appx:remez}); fall back
                to the $L^2$ or smoothed-$L^\infty$ surrogate if Remez is
                numerically unstable. (Optionally distill onto the
                safe-Pad\'e family by least squares
                on $\{t_\ell\}_{\ell=1}^{N}$.)
         \State Compute teacher pre-activations
                $\vh_n = \widetilde{\mWenc}\,(\vx_n - \widetilde{\vbdec}) +
                \widetilde{\vbenc}$ for $n = 1, \ldots, N_{\text{cal}}$.
         \State Adapt the rational activation to the teacher by minimising
                \eqref{eq:init-distill} over
                $(\ab, \bb, C_{\mathrm{in}}, C_{\mathrm{out}})$
                with $(\ab, \bb)$ initialised from $(\ab^\ast, \bb^\ast)$, yielding
                $(\widetilde{\ab}, \widetilde{\bb}, \widetilde{C}_{\mathrm{in}},
                 \widetilde{C}_{\mathrm{out}})$.
       \Statex \textbf{Step 2 --- RSAE Fine-Tuning Procedure.}
         \State Inherit teacher weights
                $\{\mWenc, \vbenc, \mWdec, \vbdec\}
                \!\leftarrow\!
                \{\widetilde{\mWenc}, \widetilde{\vbenc},
                  \widetilde{\mWdec}, \widetilde{\vbdec}\}$ and set
                $(\ab, \bb, C_{\mathrm{in}}, C_{\mathrm{out}})
                \!\leftarrow\!
                (\widetilde{\ab}, \widetilde{\bb},
                 \widetilde{C}_{\mathrm{in}},
                 \widetilde{C}_{\mathrm{out}})$.
                             \State Collect $\Theta \!\leftarrow\!
                \{\mWenc, \vbenc, \mWdec, \vbdec,
                  \ab, \bb, C_{\mathrm{in}}, C_{\mathrm{out}}\}$.
         \Repeat\Comment{Joint Adam fine-tuning of \eqref{eq:rsae-objective}.}
           \State Sample mini-batch $\vx \sim \mathcal{D}$; compute
                  $\vh = \mWenc\,(\vx - \vbdec) + \vbenc$,
                  $\vz = \phi(\vh)$ via \eqref{eq:rsae-act}, and
                  $\vxhat = \mWdec\,\vz + \vbdec$.
                                    \State Take an Adam step on $\Theta$ minimising the integrand of
                  \eqref{eq:rsae-objective}, with the unit-norm-row constraint on
                  $\mWdec$ enforced by gradient projection and renormalisation.
         \Until{convergence.}
                      \Ensure RSAE with parameters $\Theta$.
       \end{algorithmic}
       \end{algorithm}

\paragraph{Models and hook points.}
We evaluate on three open-weight language models. For \textbf{GPT-2 small} we hook the residual stream at layer $6$ (\texttt{blocks.6.hook\_resid\_post}, $d_{\mathrm{in}}=768$); for \textbf{Pythia-160m-deduped} at layer $8$ (\texttt{blocks.8.hook\_resid\_post}, $d_{\mathrm{in}}=768$); and for \textbf{Gemma-2-2B} at layer $12$ (\texttt{blocks.12.hook\_resid\_post}, $d_{\mathrm{in}}=2304$). All language-model forwards are run in \texttt{bf16}.

\paragraph{Teacher SAE checkpoints.}
Teacher SAEs are pre-trained, publicly released checkpoints. For GPT-2 small the ReLU teacher is \texttt{gpt2-small-res-jb}~\citep{bloom2024gpt2sae} ($d_{\mathrm{sae}}=24{,}576$) and the TopK teacher is the OpenAI-v5 \texttt{gpt2-small-resid-post-v5-32k}~\citep{gao2025scaling} ($d_{\mathrm{sae}}=32{,}768$, with layer-norm preprocessing of the residual stream). For Pythia-160m the ReLU teacher is \texttt{adamkarvonen/saebench\_pythia-160m-deduped\_width-2pow14\_date-0108} ($d_{\mathrm{sae}}=16{,}384$); JumpReLU and TopK teachers are taken from the matching SAEBench~\citep{karvonen2025saebench} releases at the same width. For Gemma-2-2B all three teacher families are taken from the corresponding SAEBench releases at $d_{\mathrm{sae}}=4{,}096$.

\paragraph{RSAE activation and parameter init.}
Every RSAE uses the standard-Pad\'e rational activation~\eqref{eq:rsae-act} with degree $(p, q)=(3, 2)$ for ReLU teachers and $(9, 8)$ for JumpReLU and TopK teachers (chosen from the synthetic ablation of \S\ref{sec:synthetic}; see Figure~\ref{fig:approx-degree}). Coefficients $(\ab, \bb)$ are initialised from the converged relaxed Remez fit of the teacher activation (\S\ref{sec:proposed-algorithm}); per-feature scales $\log C_{\mathrm{in}}$ and $\log C_{\mathrm{out}}$ are initialised to zero and learned. Encoder/decoder weights and biases are inherited verbatim from the teacher checkpoint at $t = 0$ so that the RSAE approximately matches the teacher before fine-tuning. To preserve the standard SAE non-negativity convention $\vz \ge \vzero$, we apply a $\max(0, \phi(\vh))$ clamp on the rational output element-wise.

\paragraph{Activation buffer and calibration.}
Activations are streamed from a Pile / OpenWebText mixture using TransformerLens hooks; we collect a calibration buffer of $\sim$2M tokens that is used both to fit the activation distillation step \eqref{eq:init-distill} and to provide statistics for the per-feature pre-activation scale used in \eqref{eq:rsae-act}. A separate held-out evaluation buffer of $\sim$10K tokens (disjoint from the calibration buffer) is used for all reported metrics.

\paragraph{Init procedure (500 steps).}
The empirical activation distillation step fits $(\ab, \bb, C_{\mathrm{in}}, C_{\mathrm{out}})$ to the teacher activation $\phi^{\text{teacher}}$ on calibration-buffer pre-activations, with all other RSAE parameters frozen. We use Adam with learning rate $10^{-3}$, batch size $1{,}024$ tokens, and run for $500$ steps.

\paragraph{Fine-tune procedure (2,000 steps).}
The full RSAE is then fine-tuned jointly on the $\ell_1$-regularised reconstruction objective~\eqref{eq:rsae-objective} for $2{,}000$ Adam steps with learning rate $5\!\times\!10^{-4}$ and a cosine decay to $0$, batch size $4{,}096$ tokens, and gradient clipping at norm $1.0$. SAE training itself runs in \texttt{fp32}. The sparsity coefficient $\lambda$ is swept on a small grid per (model, teacher) pair to trace the Pareto front of Figure~\ref{fig:pareto-three-arches}; the values reported in Tables~\ref{tab:main-results}--\ref{tab:ce-metrics} use the $\lambda$ that maximises the number of strict per-axis wins over the teacher.

\paragraph{Hardware.}
All RSAE training reported in Table~\ref{tab:wallclock} is run on a single NVIDIA RTX 5090 (32~GB). End-to-end wall-clock per (model, baseline) is dominated by the fine-tuning procedure.

\paragraph{Detailed wall-clock runtime (Table~\ref{tab:wallclock-detail}).}
Table~\ref{tab:wallclock-detail} reports the per-(model, baseline) wall-clock breakdown that underlies the mean$\,\pm\,$std summary of Table~\ref{tab:wallclock} in the main text. Init- and Finetune-procedure timings are dominated by the language-model forward pass through the host network at each training step.

\begin{table}[hpt]
  \centering
  \caption{\textbf{Detailed wall-clock runtime} of the RSAE pipeline per (model, baseline SAE), measured on a single NVIDIA RTX 5090 (32\,GB).}
  \label{tab:wallclock-detail}
  \small
  \setlength{\tabcolsep}{8pt}
  \begin{tabular}{ll|ccc}
  \toprule
  Model & Baseline SAEs
    & Init Procedure & Finetune Procedure & Total \\
  & & ($500$ steps) & ($2$K steps) & \\
  \midrule
  \multirow{2}{*}{GPT-2 small}
    & $\mathrm{ReLU}$
    & $25$\,s & $117$\,s & $142$\,s \\
    & $\mathrm{TopK}$
    & $25$\,s & $287$\,s & $312$\,s \\
  \midrule
  \multirow{3}{*}{Pythia-160m}
    & $\mathrm{ReLU}$
    & $20$\,s & $80$\,s & $100$\,s \\
    & $\mathrm{JumpReLU}$
    & $20$\,s & $98$\,s & $118$\,s \\
    & $\mathrm{TopK}$
    & $20$\,s & $81$\,s & $101$\,s \\
  \midrule
  \multirow{3}{*}{Gemma-2-2B}
    & $\mathrm{ReLU}$
    & $60$\,s & $240$\,s & $300$\,s \\
    & $\mathrm{JumpReLU}$
    & $60$\,s & $294$\,s & $354$\,s \\
    & $\mathrm{TopK}$
    & $60$\,s & $243$\,s & $303$\,s \\
  \bottomrule
  \end{tabular}
\end{table}

\paragraph{Evaluation protocol.}
Reconstruction MSE, $\ell_0$ (at the $|z| > 10^{-6}$ threshold), and the alive-feature fraction (a latent is ``alive'' if it fires on at least one held-out token) are evaluated on the $\sim$10K-token held-out buffer. Cross-entropy degradation $\Delta\mathrm{CE}$ and the loss-recovered fraction $\mathrm{LR}$ are computed over $128$ Pile / OpenWebText sequences of length $128$ tokens, by routing the residual stream through $\hat{\vx}$ at the SAE's host layer and reading the language model's next-token cross-entropy at every position; $\mathrm{CE}_{\text{zero}}$ is the cross-entropy obtained when the residual stream at the host layer is replaced by the zero vector.

\paragraph{Sparse-probing setup (Table~\ref{tab:probing}).}
We use the SAEBench probing suite of $8$ binary-classification tasks. Per dataset we train an $\ell_2$-regularised logistic regression on the full SAE feature vector (no $k$-sparsity constraint) using SAEBench's default split, and report mean test accuracy across the dataset's evaluation seeds.

\subsection{Relaxed Remez Exchange Details}
\label{appx:remez}

This appendix expands on the relaxed Remez exchange used in Step~1 of \S\ref{sec:proposed-algorithm} to solve the min--max objective \eqref{eq:fit-general}. Writing $r_{(\ab, \bb)}(t) = P(t)/Q(t)$ in the standard-Pad\'e form $P(t) = \sum_{i=0}^{p} a_i\, t^{i}$, $Q(t) = 1 + \sum_{j=1}^{q} b_j\, t^{j}$, and setting $K = p + q + 1$, the $r$-th outer iteration carries an amplitude estimate $E_r$ together with $K{+}2$ alternation nodes $\{t_d^{(r)}\}_{d=0}^{K+1} \!\subset\! [-1, 1]$. Replacing the unknown amplitude on the right-hand side of the Chebyshev equioscillation condition with $E_r$ yields the $K{+}2$ \emph{linear} equations
\begin{equation}
    P\bigl(t_d^{(r)}\bigr)
    \,-\,
    \bigl[\, y_d^{(r)} - (-1)^d\, E_r \,\bigr]\,
    \bigl(\, Q\bigl(t_d^{(r)}\bigr) - 1 \,\bigr)
    \,-\,
    (-1)^d\, E_{r+1}
    \;=\;
    y_d^{(r)},
    \qquad d = 0,\, \ldots,\, K{+}1,
    \label{eq:remez-linearised}
\end{equation}
with $y_d^{(r)} = \phi^{\text{teacher}}\bigl(t_d^{(r)}\bigr)$, which we solve in the least-squares sense for $(\ab, \bb, E_{r+1})$; the nodes are then relocated to the $K{+}2$ largest local extrema of $|r_{(\ab, \bb)}(t_\ell) - y_\ell|$ on the dense grid $\{t_\ell\}_{\ell=1}^{N}$. The relaxation, using $E_r$ in place of the unknown $E_{r+1}$ on the right-hand side, reduces each inner step to a single linear solve and makes the algorithm robust on discontinuous targets such as $\mathrm{JumpReLU}_{\theta}$. At convergence ($|E_{r+1} - E_r| < \varepsilon$) the residual equioscillates and $r_{(\ab, \bb)}$ attains the best $L^\infty$ rational approximation of type $(p, q)$ on $[-1, 1]$ by Chebyshev's characterisation~\citep{trefethen2013approximation}. Because Remez fits the standard-Pad\'e denominator $Q$ rather than the safe-Pad\'e form of \eqref{eq:rsae-act}, we then distill the converged Remez rational onto the safe-Pad\'e family by least squares on $\{t_\ell\}_{\ell=1}^{N}$.

\subsection{Rational Fitting on Synthetic Data}
\label{appx:analysis}

This appendix records the full numerical comparison of the four
rational-fitting procedures of \S\ref{sec:proposed-algorithm} across the
activation primitives used by current SAE families.  We sweep
superdiagonal types $(p, q) \in \{(3,2), (5,4), \ldots, (19,18)\}$
(extended to $(29, 28)$ for the safe-Pad\'e baselines), evaluate every
fit on a uniform dense grid of $N=4001$ points over $[-1, 1]$, and
report the best $L^2$ mean-squared error attained by each fitter.
The procedures compared are: (i)~standard-Pad\'e Remez, the relaxed
exchange of~\citet{chen2018rational} that targets the $L^\infty$
minimax objective \eqref{eq:fit-general} in the family
$Q(t) = 1 + \sum_j \phi_j t^j$ with signed $\phi_j$;
(ii)~safe-Pad\'e Remez via warm-start (Route~A), in which the
converged standard-Pad\'e Remez coefficients are distilled onto the
pole-free family $Q(t) = 1 + \sum_j |b_j||t|^{j}$ of
\eqref{eq:rsae-act} by least-squares fitting on
$\{t_\ell\}_{\ell=1}^{N}$;
(iii)~$L^2$ safe-Pad\'e fit, minimising
$\frac{1}{N}\sum_\ell (r_{(\ab, \bb)}(t_\ell) - y_\ell)^2$ over the
safe-Pad\'e parameters via Adam with cosine learning-rate decay; and
(iv)~$L^\infty$ safe-Pad\'e fit, minimising the smoothed-supremum
log-sum-exp surrogate of $\max_\ell |r_{(\ab, \bb)}(t_\ell) - y_\ell|$ via
Adam.
We refer to the safe-Pad\'e form distilled from the standard-Pad\'e Remez fit as Route~A.

\begin{table}[hpt]
\centering
\caption{\textbf{Best-MSE rational fits across procedures.}  Each
cell reports the (best degree, $L^2$ MSE on a uniform $N=4001$ grid
over $[-1, 1]$).  Standard-Pad\'e Remez minimises $L^\infty$ but is
reported here under the same $L^2$ MSE for fair comparison;
safe-Pad\'e Remez (Route~A) is the standard-Pad\'e Remez fit
distilled onto the safe-Pad\'e family of \eqref{eq:rsae-act}.
``--'' marks targets we did not run with the Route-A pipeline.
\textbf{Bold} marks the best fitter per target.}
\label{tab:approx}
\small
\begin{tabular}{l|cccc}
\toprule
target
  & Std-Pad\'e Remez
  & Safe-Pad\'e Remez
  & $L^2$ safe-Pad\'e
  & $L^\infty$ safe-Pad\'e \\
\midrule
$\mathrm{ReLU}$
  & \textbf{(15,14)~$3.8\!\times\!10^{-7}$}
  & (17,16)~$9.6\!\times\!10^{-7}$
  & (15,14)~$3.8\!\times\!10^{-6}$
  & (11,10)~$4.0\!\times\!10^{-6}$ \\
$\mathrm{JumpReLU}_{\theta=0.1}$
  & \textbf{(19,18)~$2.4\!\times\!10^{-6}$}
  & (19,18)~$1.4\!\times\!10^{-5}$
  & (29,28)~$7.5\!\times\!10^{-5}$
  & (19,18)~$7.6\!\times\!10^{-4}$ \\
$\mathrm{JumpReLU}_{\theta=0.2}$
  & \textbf{(11,10)~$7.1\!\times\!10^{-5}$}
  & --
  & (23,22)~$3.8\!\times\!10^{-4}$
  & (17,16)~$2.7\!\times\!10^{-3}$ \\
$\mathrm{JumpReLU}_{\theta=0.3}$
  & (5,4)~$9.8\!\times\!10^{-4}$
  & --
  & \textbf{(29,28)~$8.9\!\times\!10^{-4}$}
  & (13,12)~$7.5\!\times\!10^{-3}$ \\
$\mathrm{JumpReLU}_{\theta=0.4}$
  & \textbf{(9,8)~$9.7\!\times\!10^{-4}$}
  & --
  & (29,28)~$1.6\!\times\!10^{-3}$
  & (9,8)~$1.6\!\times\!10^{-2}$ \\
$\mathrm{JumpReLU}_{\theta=0.5}$
  & (5,4)~$4.3\!\times\!10^{-3}$
  & --
  & \textbf{(29,28)~$2.2\!\times\!10^{-3}$}
  & (3,2)~$1.9\!\times\!10^{-2}$ \\
\bottomrule
\end{tabular}
\end{table}

\begin{figure}[t]
  \centering
  \includegraphics[width=\linewidth]{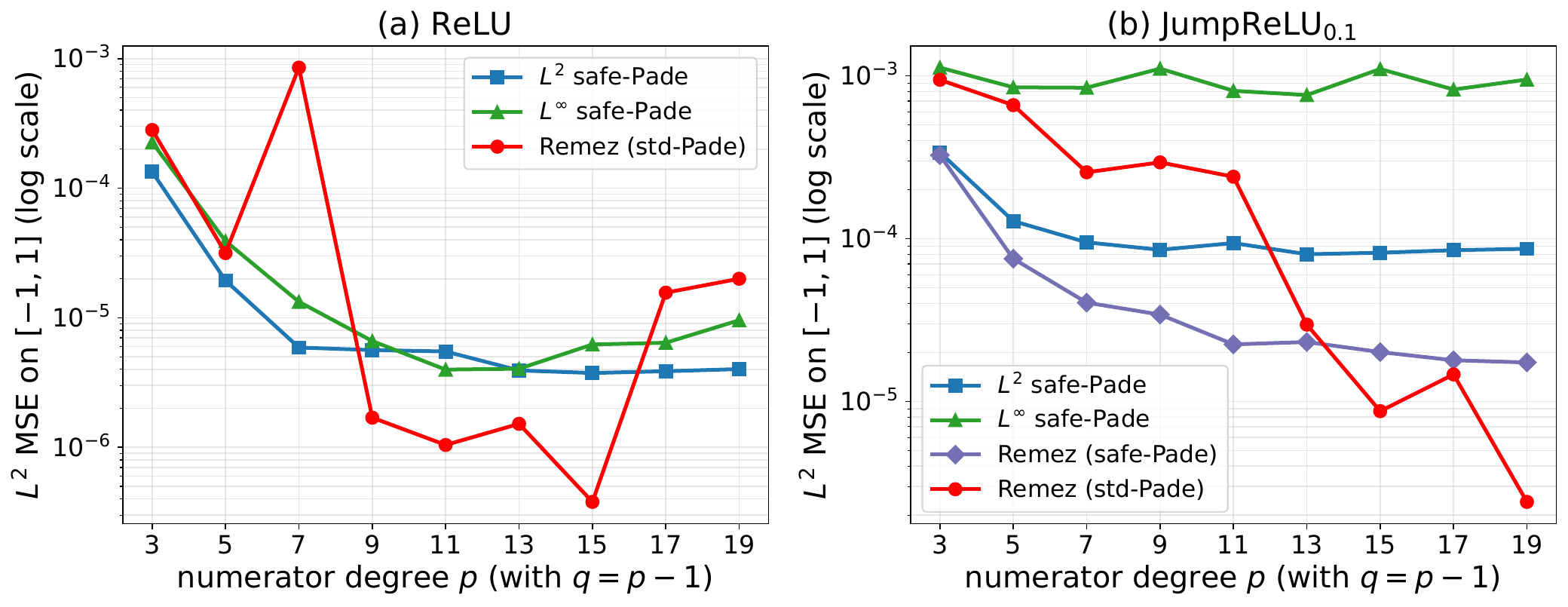}
  \caption{\textbf{$L^2$ MSE versus rational degree on $[-1, 1]$.}
  Each curve traces the best $L^2$ MSE attained by one fitter as the
  type $(p, q)$ is swept across $\{(3,2), (5,4), \ldots, (19,18)\}$.
  \textbf{(a)} On $\mathrm{ReLU}$, standard-Pad\'e Remez (red) decays
  near-exponentially with degree until type $(15, 14)$, beyond which
  numerical conditioning of the linearised exchange dominates and the
  curve flattens.  The pole-free safe-Pad\'e $L^2$ (blue) and
  $L^\infty$ (green) fits saturate earlier, near
  $\mathrm{MSE} \!\approx\! 4\!\times\!10^{-6}$, due to the strictly
  smaller capacity of the safe family.  \textbf{(b)} On
  $\mathrm{JumpReLU}_{0.1}$, the same pattern holds and the
  Route-A safe-Pad\'e Remez fit (purple) tracks the standard-Pad\'e
  Remez curve at a fixed $\sim\!7\times$ offset, the safe-Pad\'e
  family ceiling.}
  \label{fig:approx-degree}
\end{figure}

\paragraph{Standard-Pad\'e Remez dominates on smooth and small-jump
targets.}
On four of the six rows of Table~\ref{tab:approx}
($\mathrm{ReLU}$ and $\mathrm{JumpReLU}_\theta$ for
$\theta \in \{0.1, 0.2, 0.4\}$), the relaxed Remez exchange in the
standard-Pad\'e family attains the best MSE by a margin of
roughly $2\text{--}30\times$ over the next-best independent
fitter (and $1.6\text{--}5.8\times$ over Route-A safe-Pad\'e Remez,
which is itself distilled from this Remez solution).  This is the
expected behaviour of a Chebyshev best-rational solution: by
\citet{trefethen2013approximation}'s characterisation, the
equioscillation property of Remez's converged residual is necessary
and sufficient for $L^\infty$ optimality, and on smooth or
small-jump targets the resulting $L^\infty$ error tracks closely the
$L^2$ MSE because the residual oscillation amplitude itself
controls both norms.  In particular, on $\mathrm{ReLU}$ at type
$(15, 14)$ Remez reaches MSE $3.8\!\times\!10^{-7}$, well below the
noise floor of any downstream SAE reconstruction objective, and on
$\mathrm{JumpReLU}_{0.1}$ at type $(19, 18)$ MSE
$2.4\!\times\!10^{-6}$, with sup-error tracking the
information-theoretic half-jump floor $\sup |r - f| \geq \theta/2$.

\paragraph{Safe-Pad\'e Remez (Route~A) recovers most of the
standard-Pad\'e quality while staying pole-free.}
The safe-Pad\'e family $Q(t) = 1 + \sum_j |b_j||t|^{j}$ is a strict
subset of the standard-Pad\'e family, so it cannot exceed
standard-Pad\'e Remez in approximation power; the relevant question
is how much accuracy the pole-safety constraint costs.  Route~A
warm-starts a safe-Pad\'e fit from the converged standard-Pad\'e
Remez coefficients via $a_i \!\leftarrow\! \psi_i$,
$b_j \!\leftarrow\! |\phi_j|$ and refines under $L^2$ on the same
dense grid.  The result on the two targets we ran end-to-end
(Table~\ref{tab:approx}) is a $2.5\times$ MSE gap to standard-Pad\'e
Remez on $\mathrm{ReLU}$ ($9.6\!\times\!10^{-7}$ vs.~$3.8\!\times\!10^{-7}$)
and a $\sim\!6\times$ gap on $\mathrm{JumpReLU}_{0.1}$
($1.4\!\times\!10^{-5}$ vs.~$2.4\!\times\!10^{-6}$).  Both are still
$4\text{--}50\times$ tighter than the direct $L^2$ and $L^\infty$
safe-Pad\'e fits at the same family, indicating that the warm-start
from the Remez minimax solution materially helps the optimisation.

\paragraph{$L^2$ safe-Pad\'e wins at large jump sizes due to Remez's
numerical instability.}
At $\theta = 0.3$ and $\theta = 0.5$, standard-Pad\'e Remez
underperforms the $L^2$ safe-Pad\'e fit even at lower degree
(Remez's best is $(5, 4)$ vs.~$L^2$'s best at $(29, 28)$).  This is
not a deficiency of the Remez objective itself but of the
linearised exchange: when the target's discontinuity is large
compared to its derivative scale, the linear system
\eqref{eq:remez-linearised} becomes severely ill-conditioned at
high $(p, q)$, causing the exchange to oscillate or diverge.  The
$L^2$ safe-Pad\'e fit, by contrast, is solved by Adam on a smooth
non-convex landscape and remains numerically stable across all
degrees we tested.  The MSE gap is small in absolute terms
(compare $9.8\!\times\!10^{-4}$ vs.~$8.9\!\times\!10^{-4}$ at
$\theta=0.3$) and reflects the fact that at large $\theta$ both
methods are dominated by the half-jump floor anyway.

\paragraph{$L^\infty$ safe-Pad\'e is consistently weakest in $L^2$
MSE, by design.}
Across every row, the smoothed-supremum fit attains the worst $L^2$
MSE among the four procedures.  This is expected: the log-sum-exp
surrogate concentrates training pressure on the largest-error
points, sacrificing $L^2$ MSE for tighter $L^\infty$ control.  We
include $L^\infty$ safe-Pad\'e in the table for completeness, since
it remains the right choice in any application that cares about
worst-case error rather than average error.

\end{document}